\theoremstyle{definition}
\newtheorem{definition}{Definition}[section]
\newtheorem{theorem}{Theorem}[section]
\title{Saddlepoints in Unsupervised Least Squares}
\author{Samuel Gerber }
\begin{document}

\maketitle

\begin{abstract}
This paper sheds light on the risk landscape of unsupervised least squares in
the context of deep auto--encoding neural nets. We formally establish an
equivalence between unsupervised least squares and principal manifolds. This
link provides insight into the risk landscape of auto--encoding under the mean
squared error, in particular all non-trivial critical points are saddlepoints.
Finding saddlepoints is in itself difficult, overcomplete auto--encoding poses
the additional challenge that the saddlepoints are degenerate.  Within this
context we discuss regularization of auto--encoders, in particular bottleneck,
denoising and contraction auto--encoding and propose a new optimization
strategy that can be framed as particular form of contractive regularization.
\end{abstract}

\section{Introduction}
\label{sec:introduction}
This papers examines the risk landscape of unsupervised least squares.
Unsupervised least squares aims to find a reconstruction map $r$ that
minimizes the discrepancy of the output to it's own input , i.e., minimizing
$\lVert r(x) - x\rVert^2$. In the neural network literature this is referred to
as auto--encoding. Without any restrictions on $r$ the identity mapping is an
optimal, albeit uninformative, solution. An informative solution should provide
a salient summary of the data. To find informative solutions, auto--encoding is
typically formulated as a concatenation of an encoder map $\lambda$ and a
decoder map $g$ with $r = g \circ \lambda$ with restrictions on
$\lambda$ or $g$ to avoid learning the identity function. Several
regularization methods exist to enforce such restrictions. For example, the
bottleneck auto--encoder forces the map $\lambda$ to be of lower
dimension than the input through adding a bottleneck layer.  While such
regularization avoid learning the identity map, they do not address the general
problem of overfitting in unsupervised least squares.

For networks with linear activations, \citep{kawaguchi2016deep} show that for
auto-encoding  all minima are globally optimal and the solutions correpsonds to
the maximal principal subspace. This paper provides an analogous
characterization of the critical points of the risk function for deep
auto-encoders in the non-linear case.  In particular all critical points,
besides minima with zero risk, are saddle points. Zero risk solutions defy the
purpose of autencoders, i.e.  finding a more compact representation that
characterizes the input data. In the linear case solution with zero risk are
identity functions. Zero In the non-linear case zero risk solutions include,
besides the identity, space filling manifolds. This highlights an important
distinction between the linear and non-linear case. In the linear case it is
sufficent to restrict the dimensionality of the principal subspace, for example
through the use of a bottleneck layer. For non-linear auto-encoders additional
regularization of the encoding and decoding functions is required, otherwise
the optimzation will tend towards space-filling solutions.

We establish a tight connection between unsupervised least squares and the more
restrictive notion of a principal manifold. This connection illuminates the
risk landscape of unsupervised leas squares, which in turn informs the
behaviour of auto--encoding under various forms of regularization.
\citet{hastie:jasa89} formally defined principal curves as curves that pass
through the middle of a distribution and showed that principal curves are
critical points of the mean squared reconstruction error.  This indicates a
tight connection to unsupervised least squares. Principal manifolds enforce
that the reconstruction function constitutes an orthogonal projection, while
unsupervised least squares permit arbitrary reconstruction functions. To
establish the equivalence between principal manifolds and unsupervised least
squares we show that the critical points of the mean squared error risk for
arbitrary reconstruction functions, and thus the critical points of
auto--encoders under squared error loss, are principal manifolds. The
connection of unsupervised least squares to principal manifolds illuminates the
shape of the auto--encoder risk landscape.  In particular \citet{duchamp:as96}
showed that principal curves are saddlepoints of the mean squared error, this
observation also holds for auto--encoding.  \citet{duchamp:as96} and
\citet{gerber2013regularization} demonstrate that the saddlepoint nature of the
risk landscape leads to overfitting and renders traditional cross-validation
techniques infeasible for tuning of regularization parameters.

This difficulty persists for auto--encoders as Figure~\ref{fig:hard-constraint}
illustrates in the case of bottleneck auto--encoding. The bottleneck layer
enforces a hard constraint on the dimensionality of the reconstruction
function, while the network architecture implicitly regularizes the complexity
of the reconstruction function.  However, due to the saddlepoint nature of the
risk landscape the risk on test data keeps decreasing with increasing network
complexity. Thus, cross-validation will fail to suggest an appropriate network
complexity. In the limit, with infinite training data and no restriction on the
complexity of the reconstruction function a space filling curve, with zero
reconstruction error, is an optimal fit. These global minima, space-filling
manifolds or the identity function, of the auto--encoding risk are not
desirable solutions and do not provide an adequate summary representation of
the data.
\begin{figure}[htb]
\centering
\begin{tabular}{ccc}
  \includegraphics[width=0.3\linewidth]{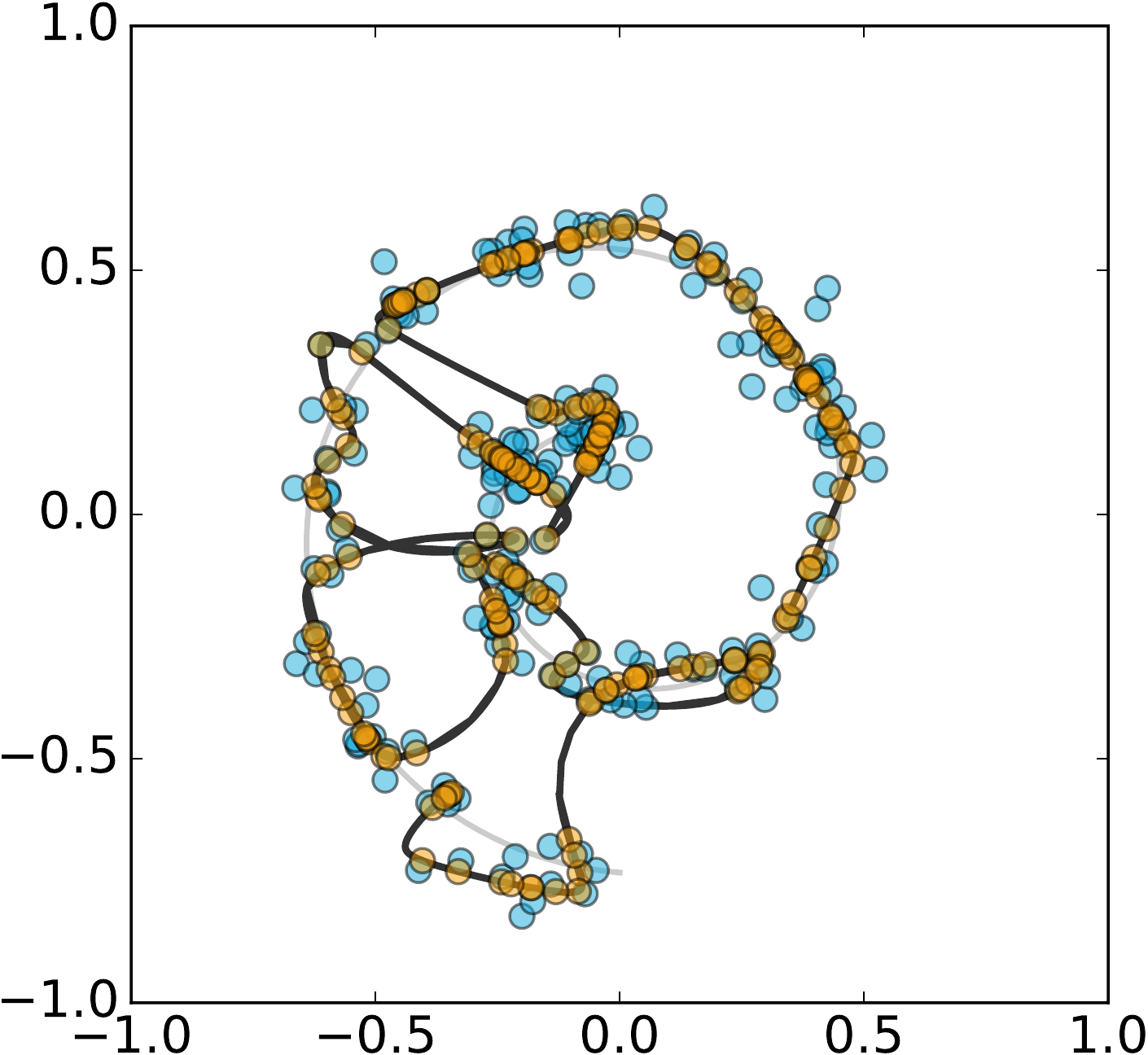} &
  \includegraphics[width=0.3\linewidth]{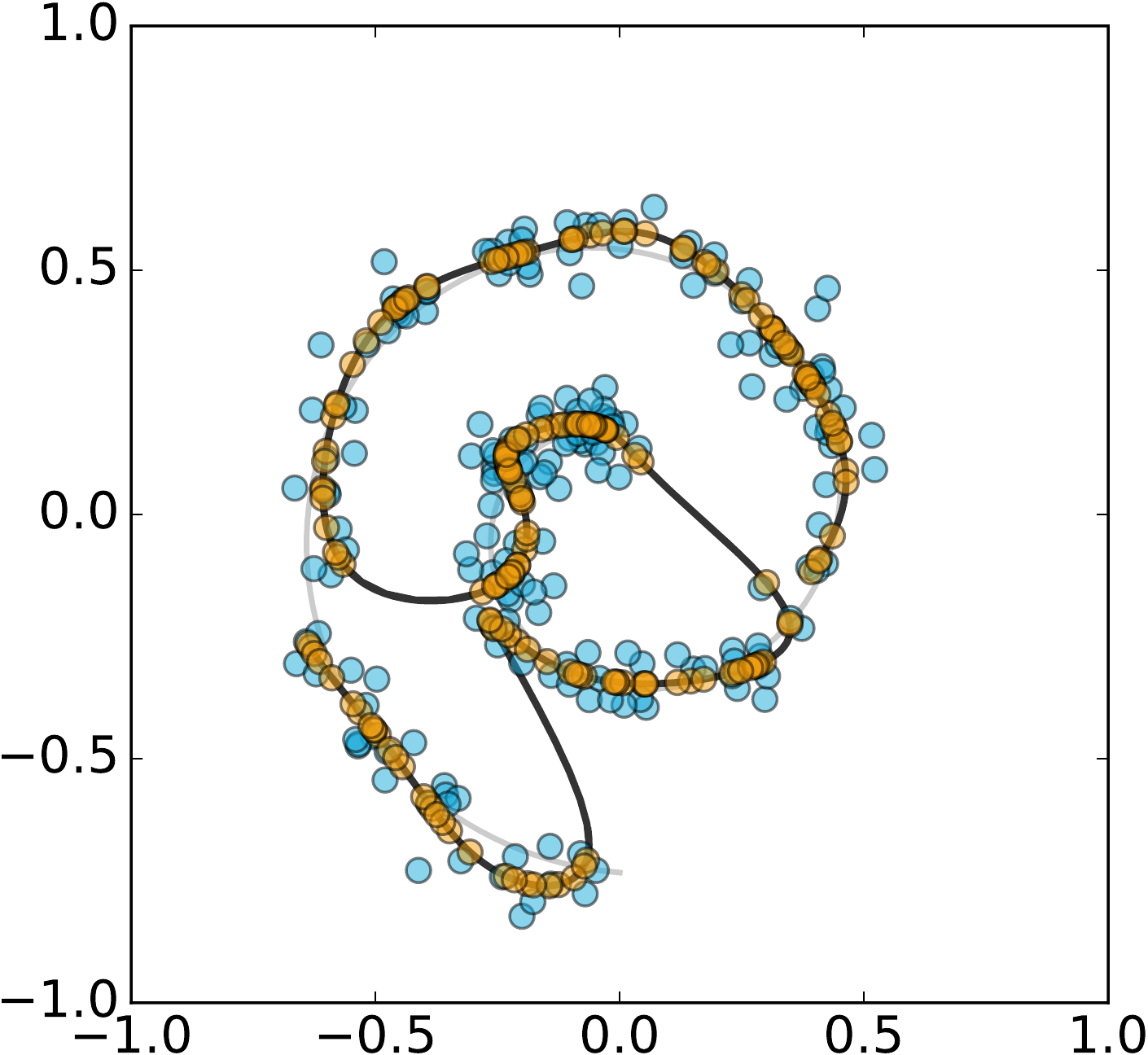} &
  \includegraphics[width=0.3\linewidth]{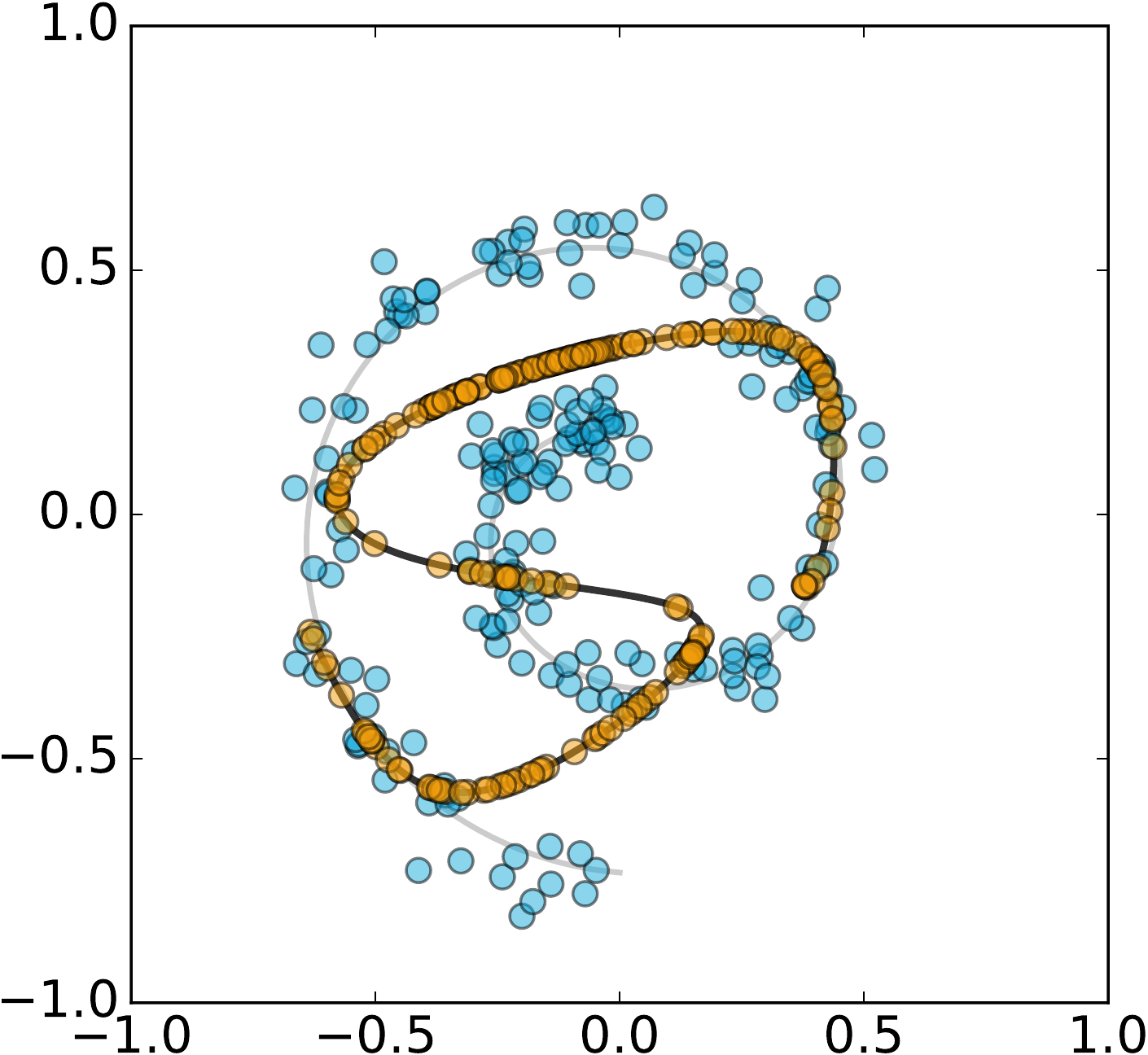} \\
  \includegraphics[width=0.3\linewidth]{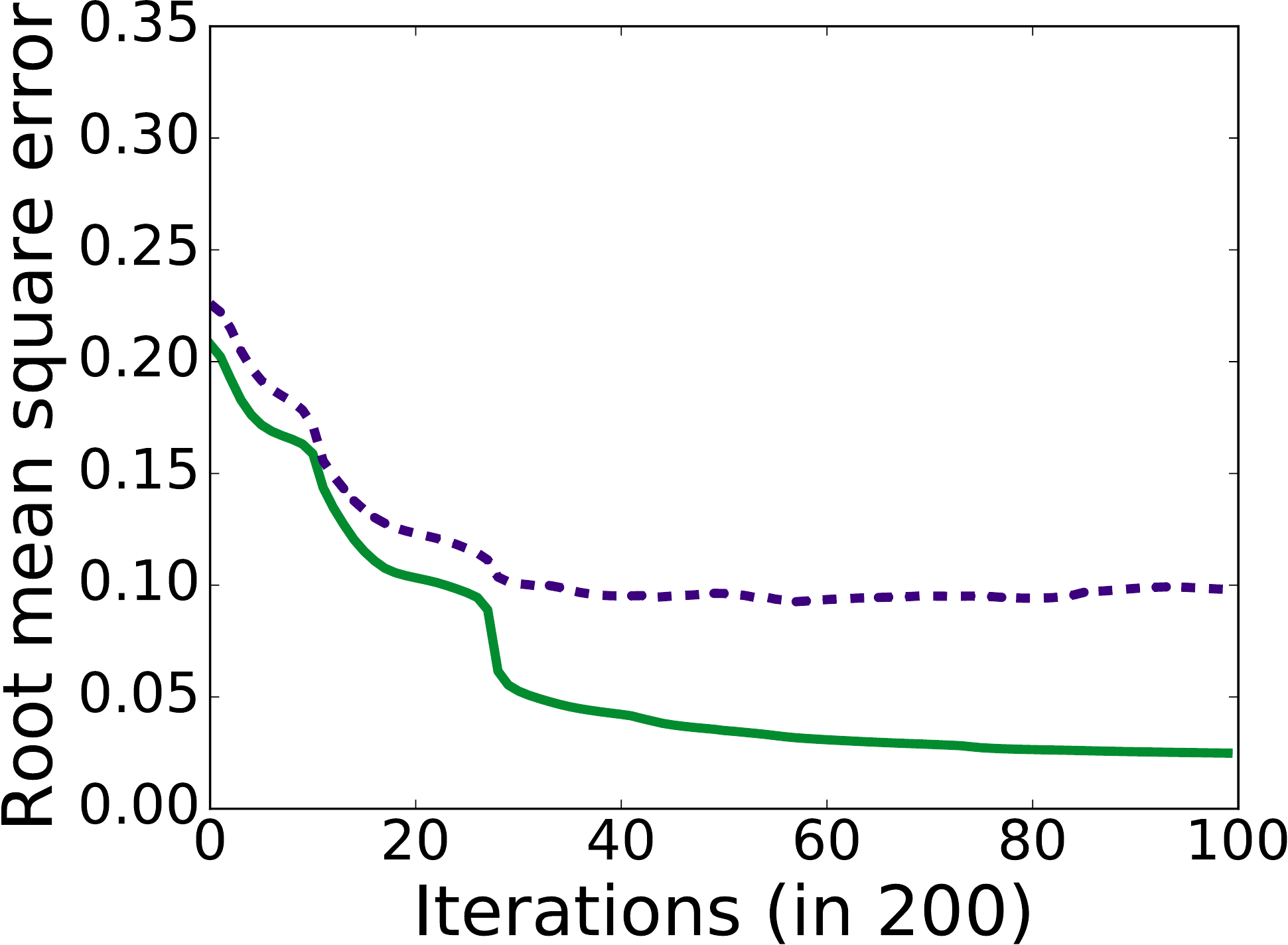} &
  \includegraphics[width=0.3\linewidth]{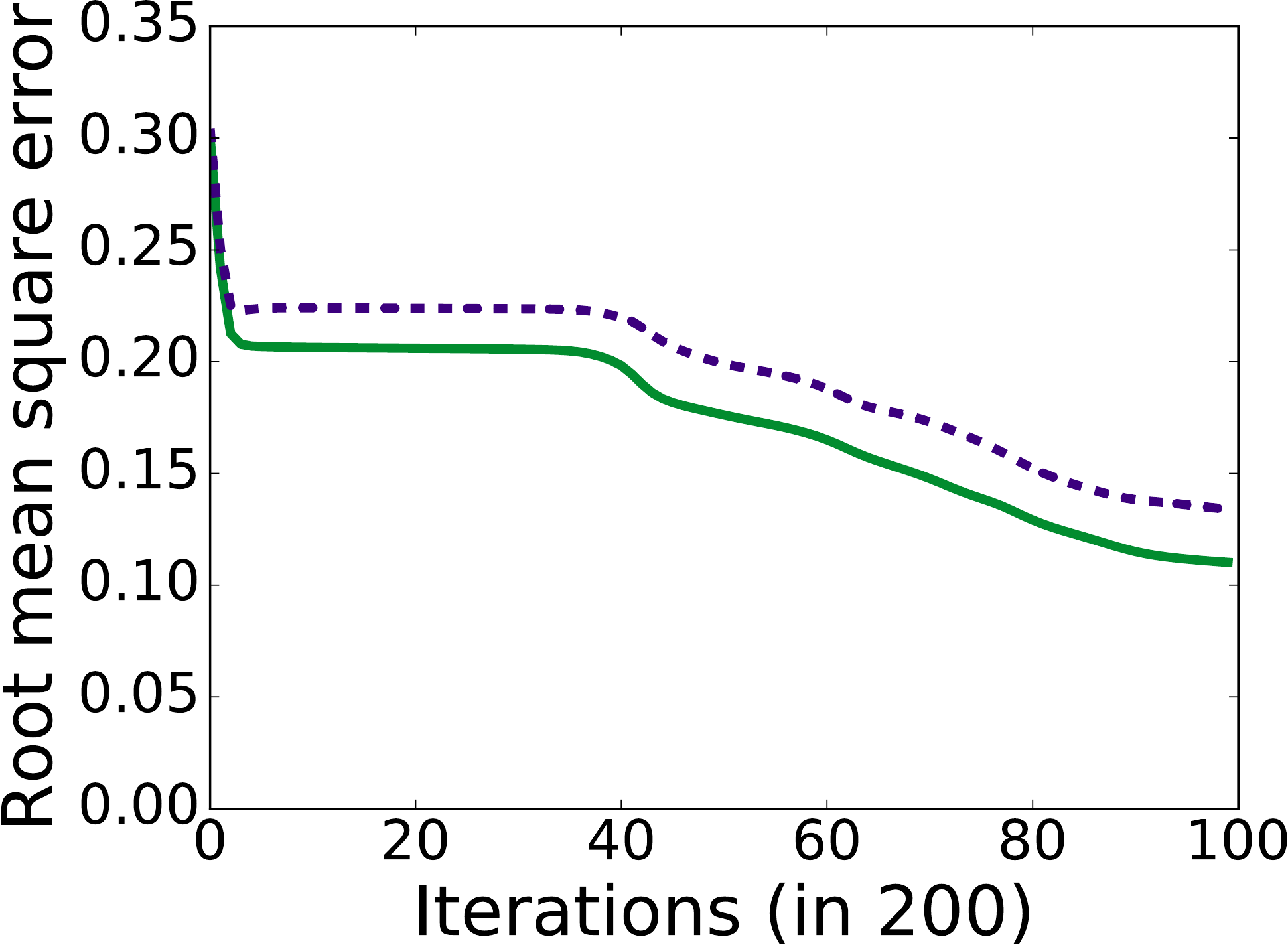} &
  \includegraphics[width=0.3\linewidth]{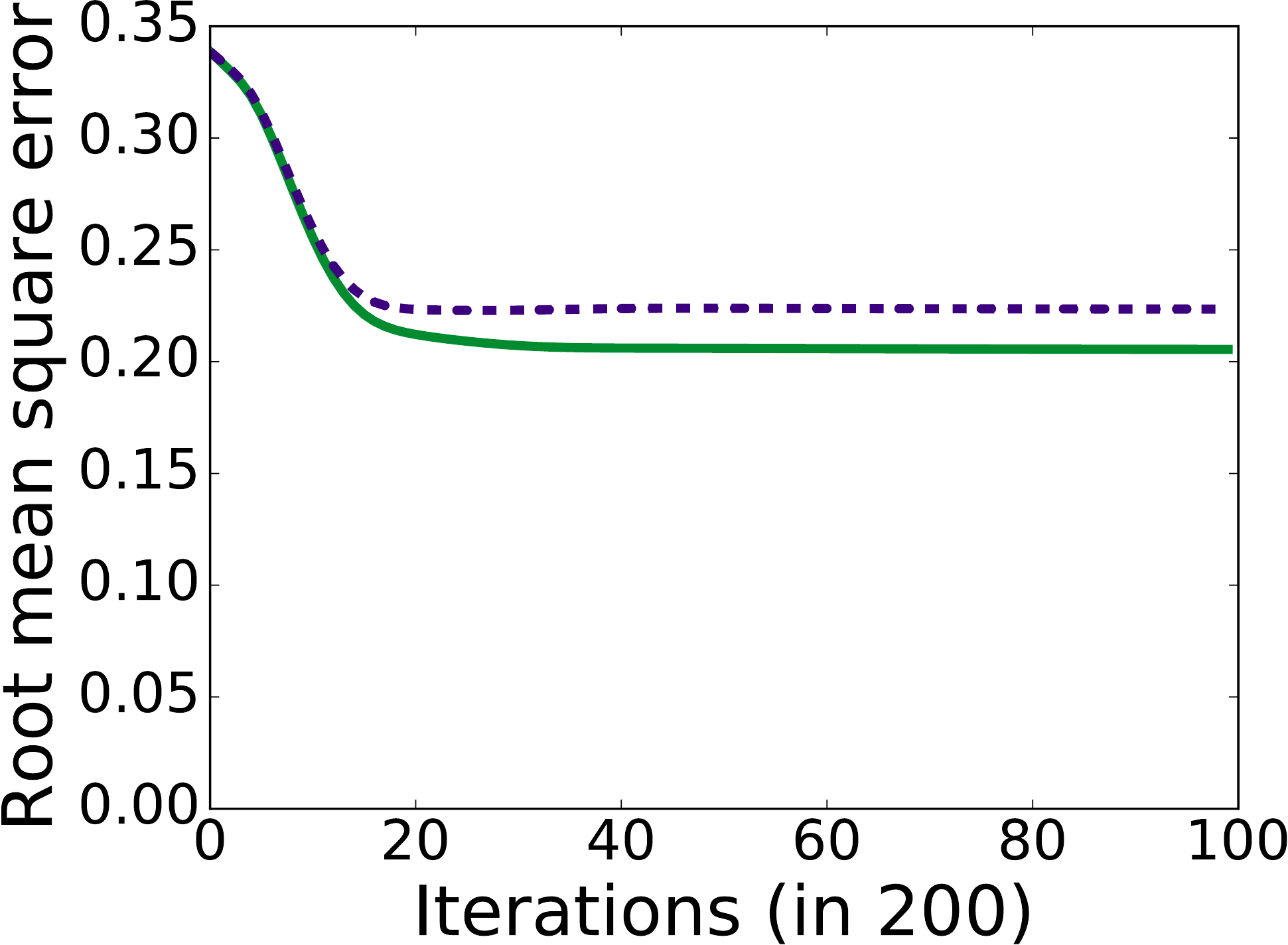} \\
  (a) & (b) & (c)
\end{tabular}
\caption{Bottleneck auto--encoding enforces a hard dimensionality constrain
  through a  bottleneck neural network layer.  The images show (top row) the
  minima found after 20,000 iterations and (bottom row) the root mean square
  error on training (green solid line) and test data (purple dashed
  line)  using network architectures with hidden layers of (a)
  50--100--200--100--50--1--50--100-200--100--50, (b)
  50--100--50--1--50--100--50, (c) 200--1--200  units. Depending on the network
  architecture, auto--encoding does (a) overfit,  (b) fit  well or (c)
  underfit. The relatively small reach of the spiral example requires a fairly
  deep network to achieve an accurate fit. However, the test error does not
  suggest that the fit in (b) is adequate and suggests to use the most powerful
  network. In all cases the one-dimensional parametrization of the
  auto--encoder jumps across the spiral arms.
  \label{fig:hard-constraint}
}
\vspace{-0.05in}
\end{figure}
The hard dimensionality constraint in bottleneck auto--encoders avoids fitting
the identity function. However, the hard constraint requires careful
initialization to find an adequate parametrization.  As an alternative to the
bottleneck layer several regularization schemes, such as
denoising~\citep{vincent2008extracting} and contractive
auto--encoding\citep{alain2014regularized}, for controlling the dimensionality
of the reconstruction function have been proposed. These regularization methods
provide a soft constraints on dimensionality and solve or circumvent the issue
of inadequate parametrization. The soft dimensionality constraints do not
alleviate the challenge of tuning the regularization or selecting an
appropriate network architecture. Figure~\ref{fig:contractive-architecture}
shows that for a fixed contractive penalty the test error for different network
architecture either selects an architecture that severely overfits or tends
towards the identity mapping.
\begin{figure}[htb]
\centering
\begin{tabular}{ccc}
  \includegraphics[width=0.3\linewidth]{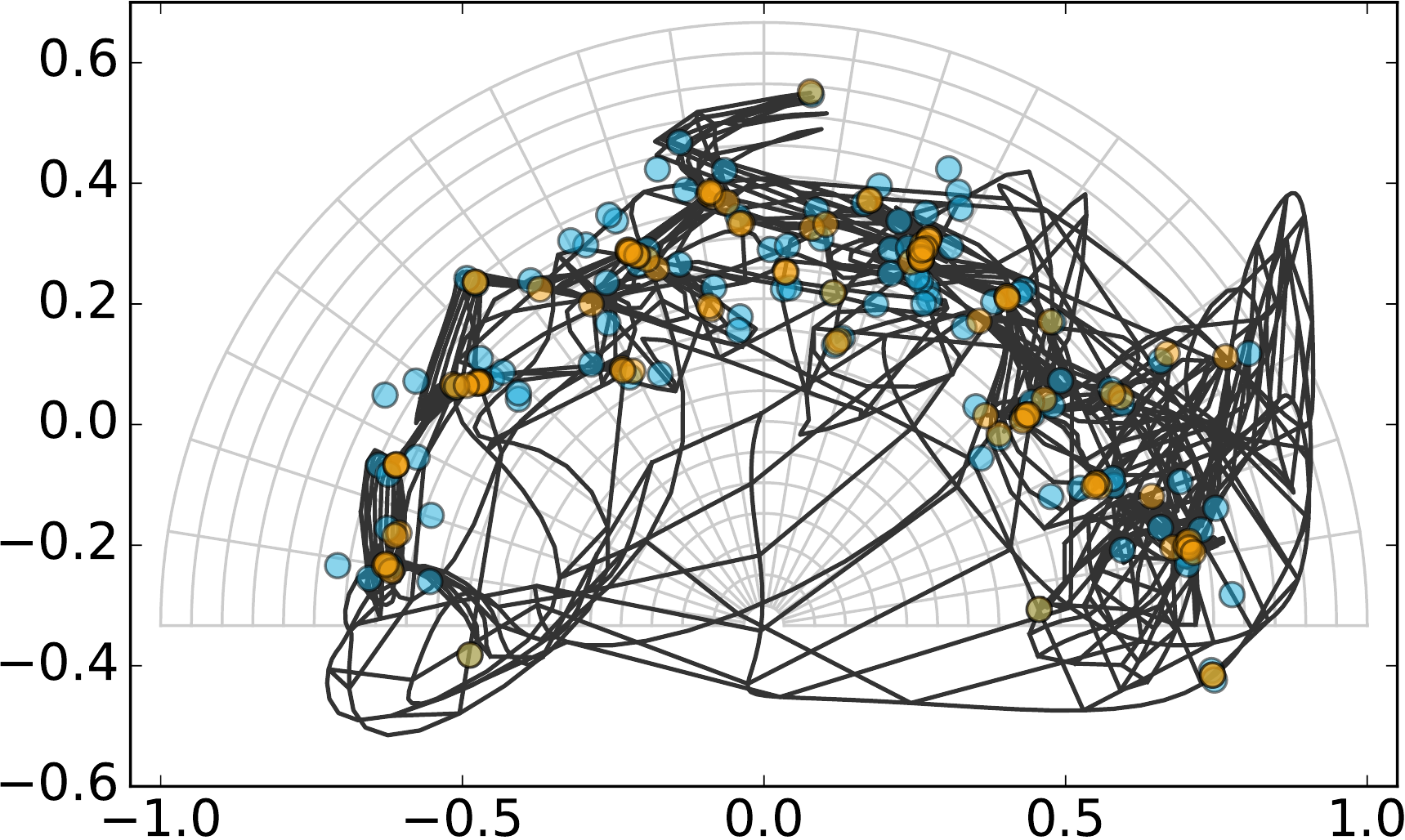} &
  \includegraphics[width=0.3\linewidth]{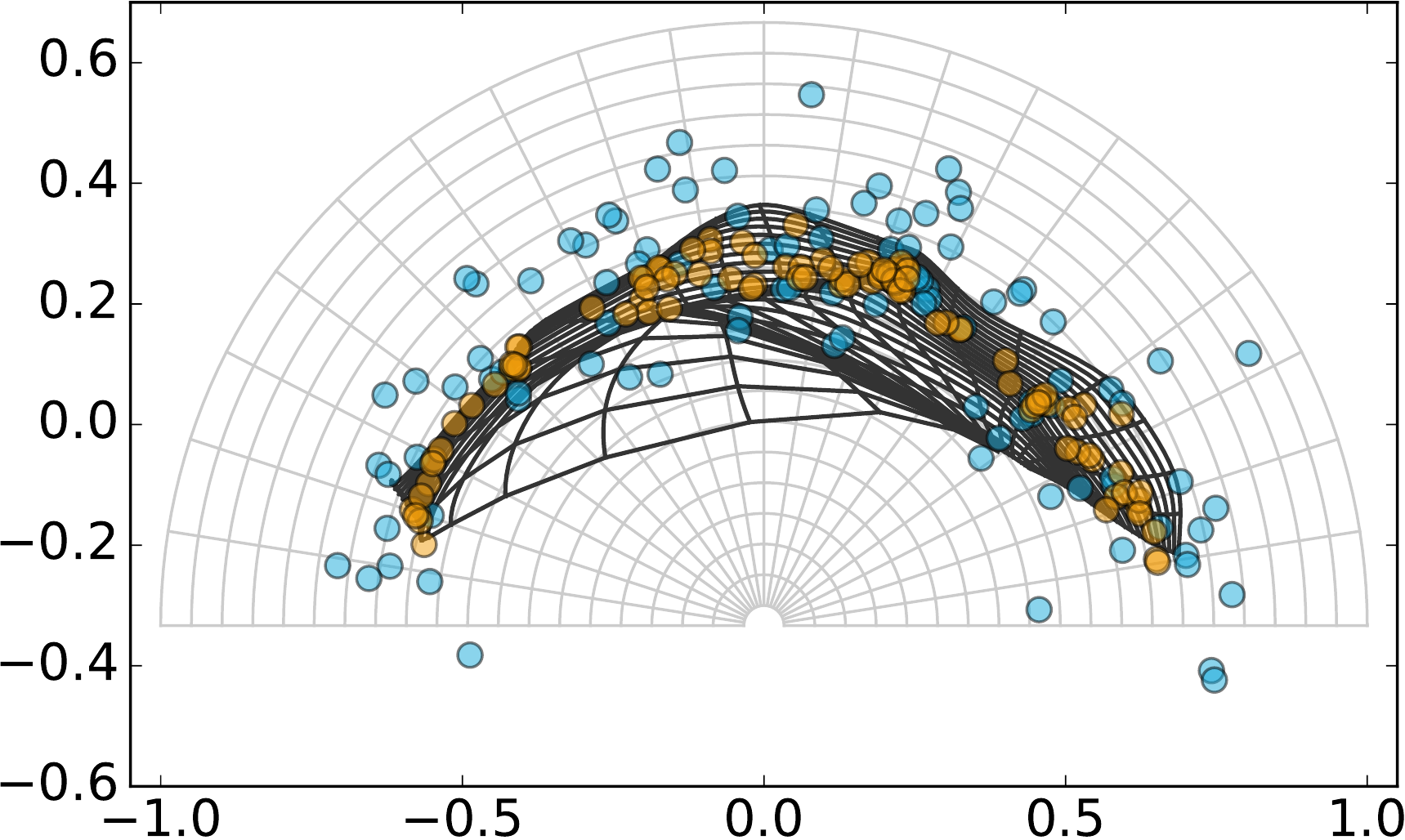} &
  \includegraphics[width=0.3\linewidth]{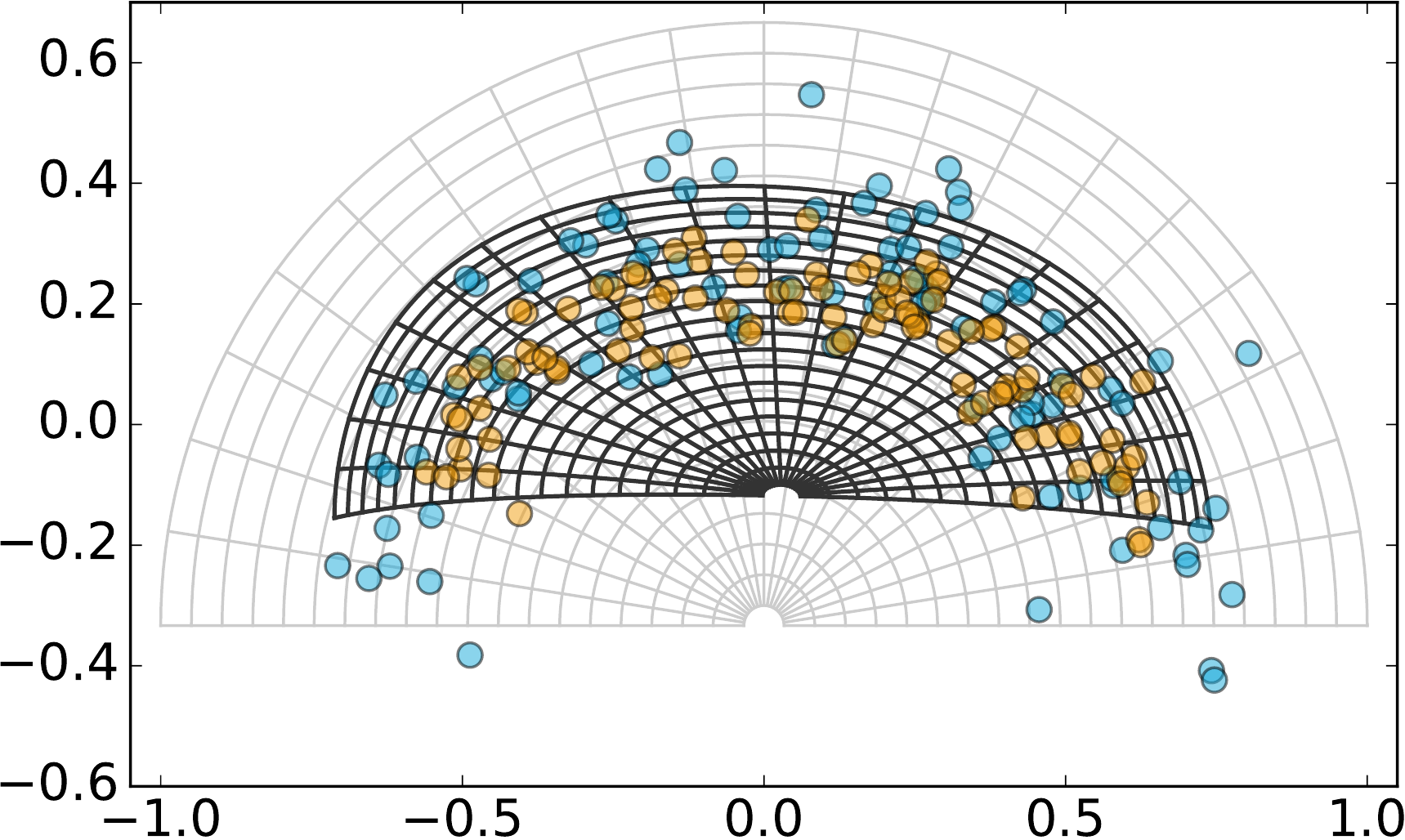} \\
  \includegraphics[width=0.3\linewidth]{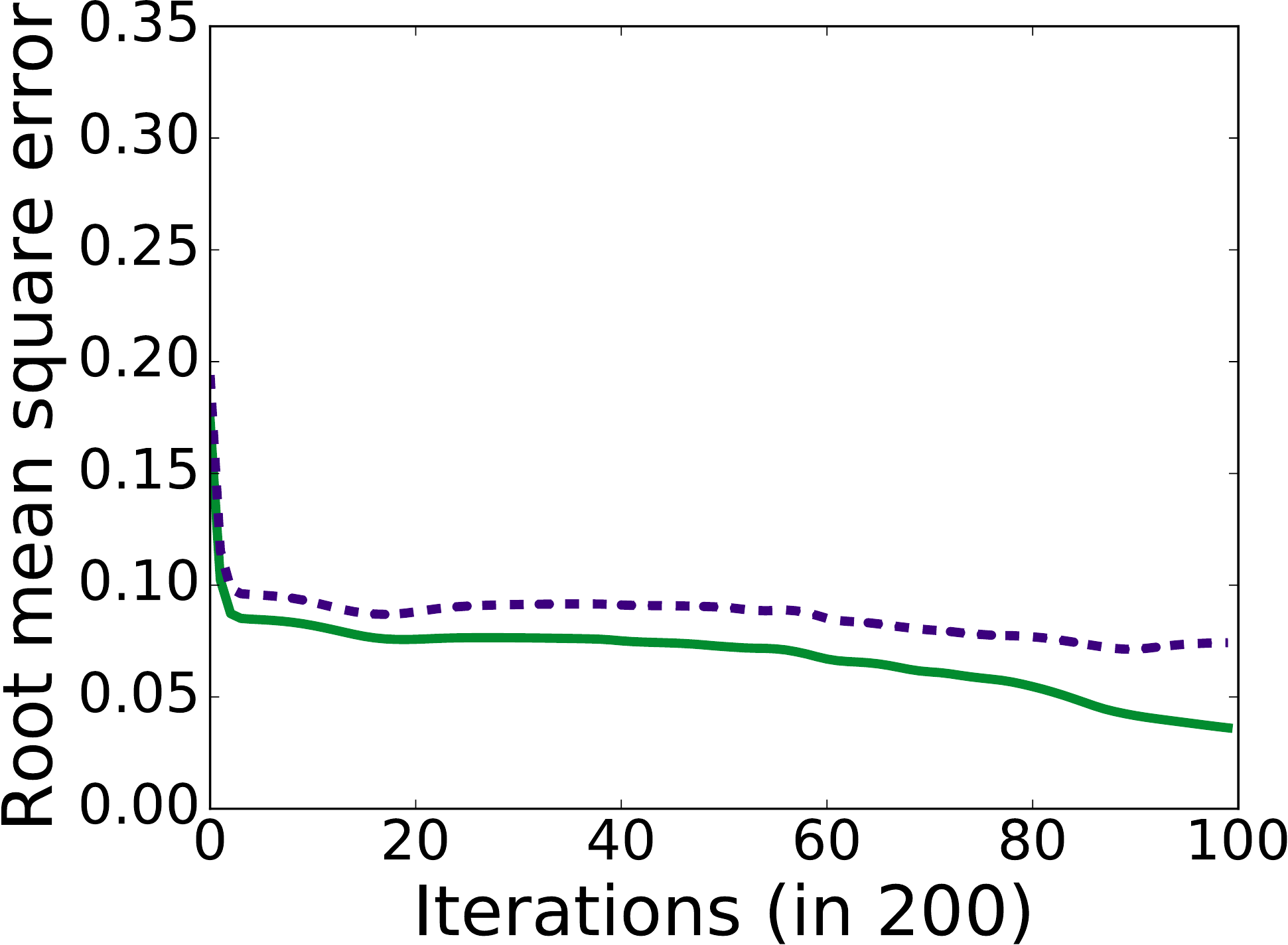} &
  \includegraphics[width=0.3\linewidth]{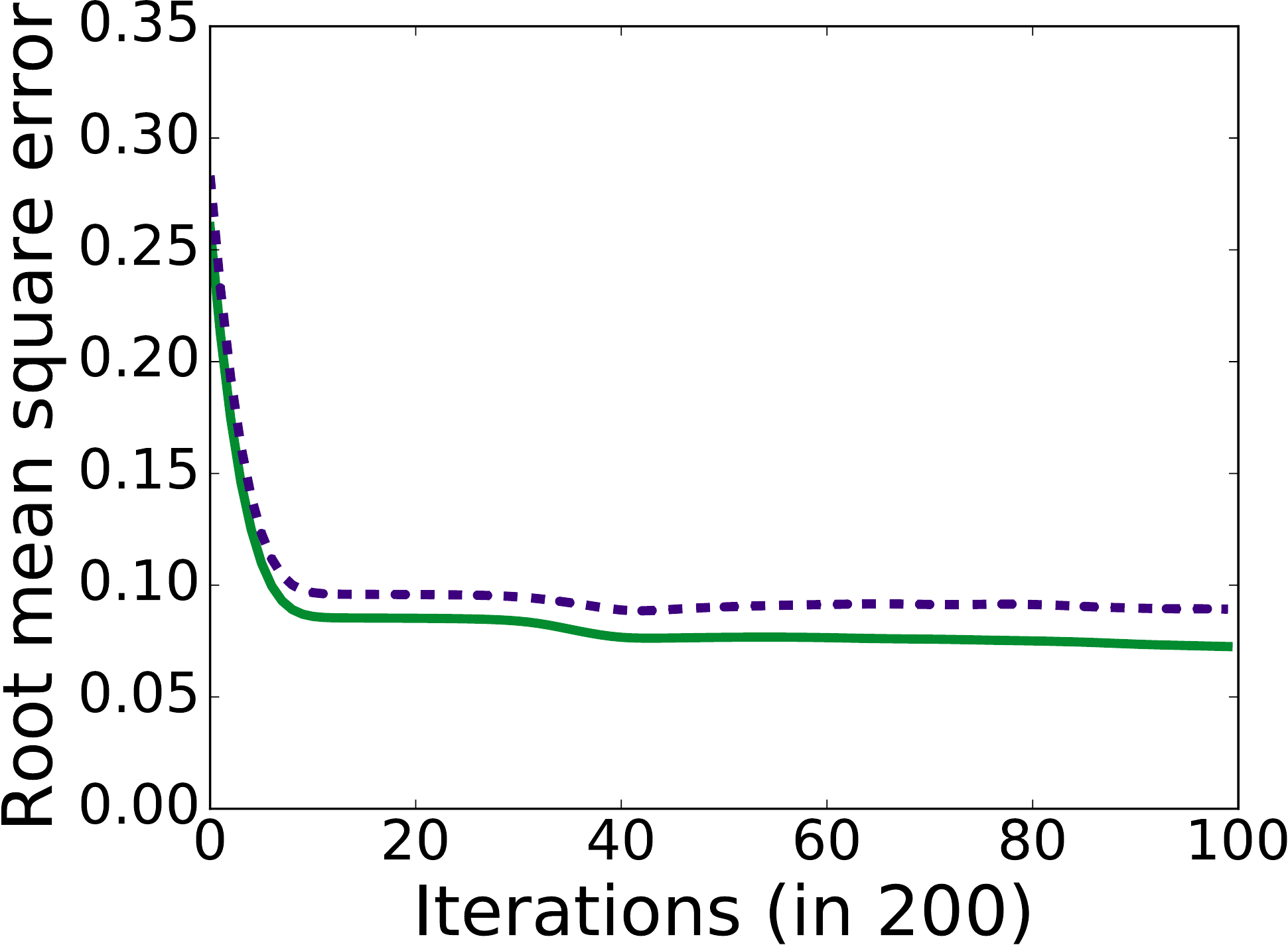} &
  \includegraphics[width=0.3\linewidth]{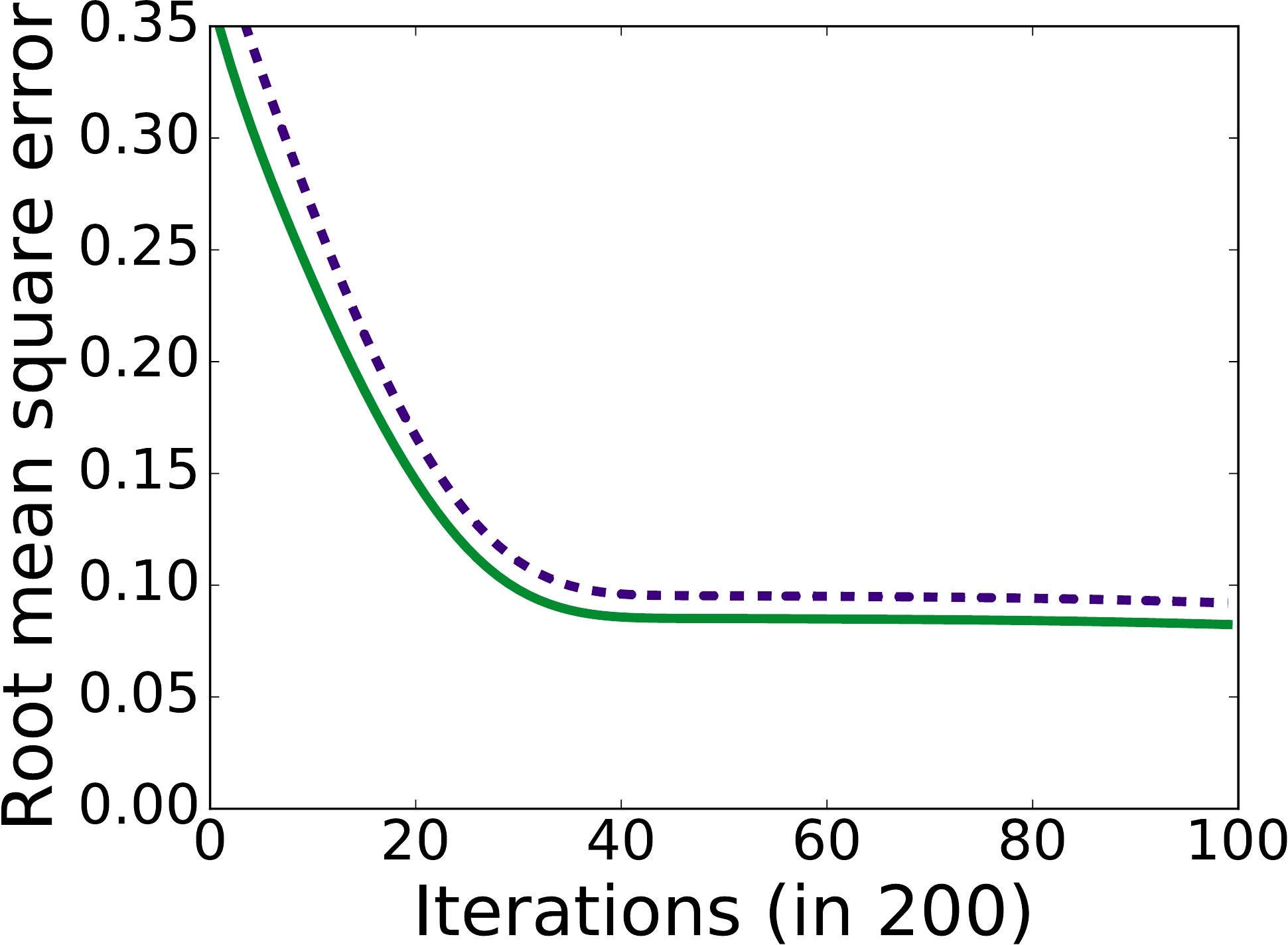} \\

  (a) & (b) & (c)
\end{tabular}
\caption{Effect of network architecture in combination with contractive
  regularization penalty. The image show the (top row) minima found after 20,000
  iterations and (bottom row) the root mean square
  error on training (green solid line) and test data (purple dashed
  line)  using a fixed penalty of 0.02 and network architectures with
  hidden layers of (a) 50--100--200--200--100--50, (b) 50--100--100--50, (c)
  200--200 units. The gray lines show a polar grid on the input space and the
  black lines show the deformation of the polar grid after mapping it through
  the auto--encoder.  The gray line shows the spiral the data is sample from
  with noise.  The black lines shows the spiral after mapping it through the
  auto--encoder.
  \label{fig:contractive-architecture}
}
\vspace{-0.05in}
\end{figure}
Fixing the network architecture while adjusting the regularization does not
solve the issue as Figure~\ref{fig:contractive-penalty} illustrates. A very
flexible network that permits severe overfitting leads to the smallest test
error. If the network architecture is restrictive enough the test error
decreases with decreasing penalty weight and the solution tends towards the
identity mapping.
\begin{figure}[htb]
\centering
\begin{tabular}{ccc}
  \includegraphics[width=0.3\linewidth]{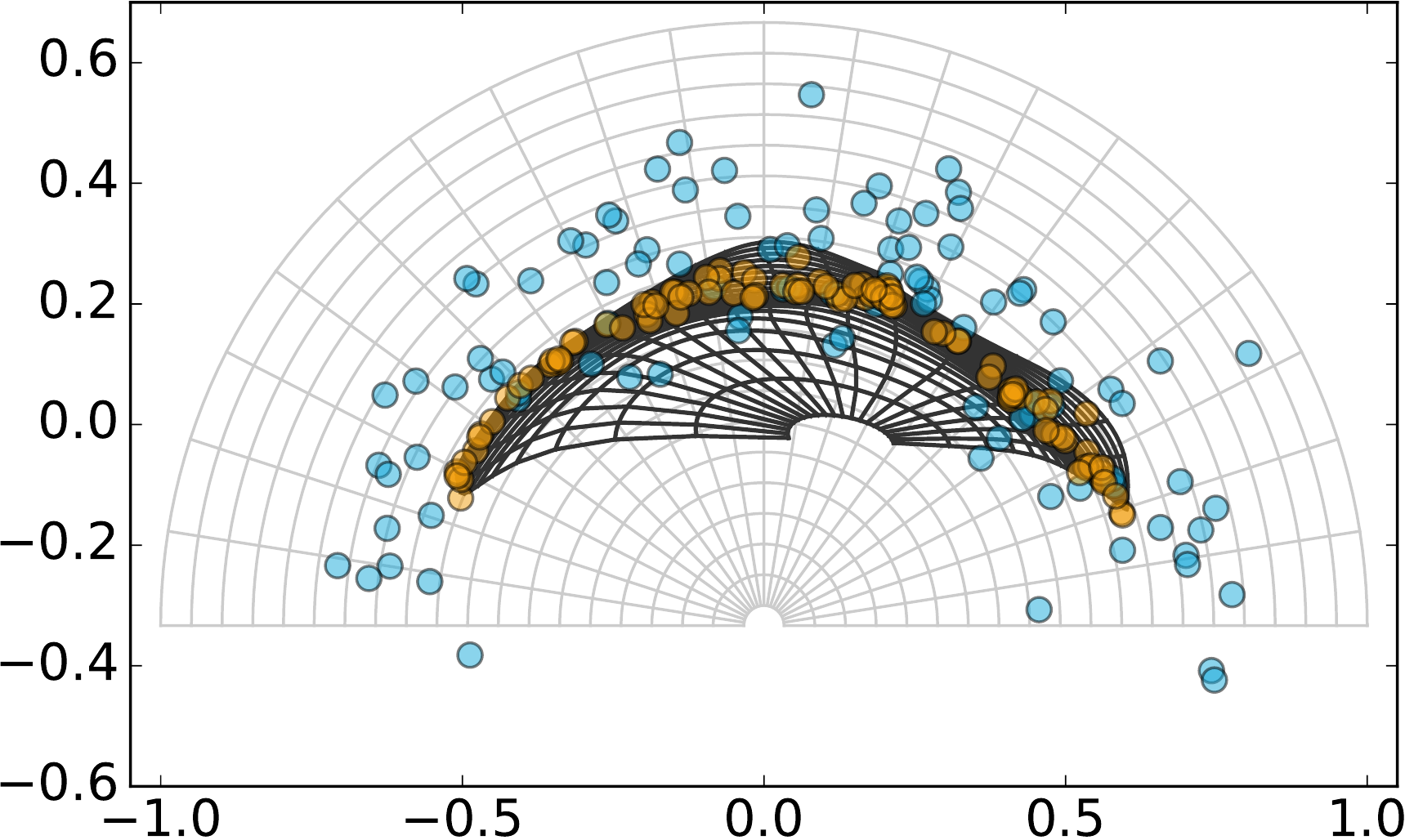} &
  \includegraphics[width=0.3\linewidth]{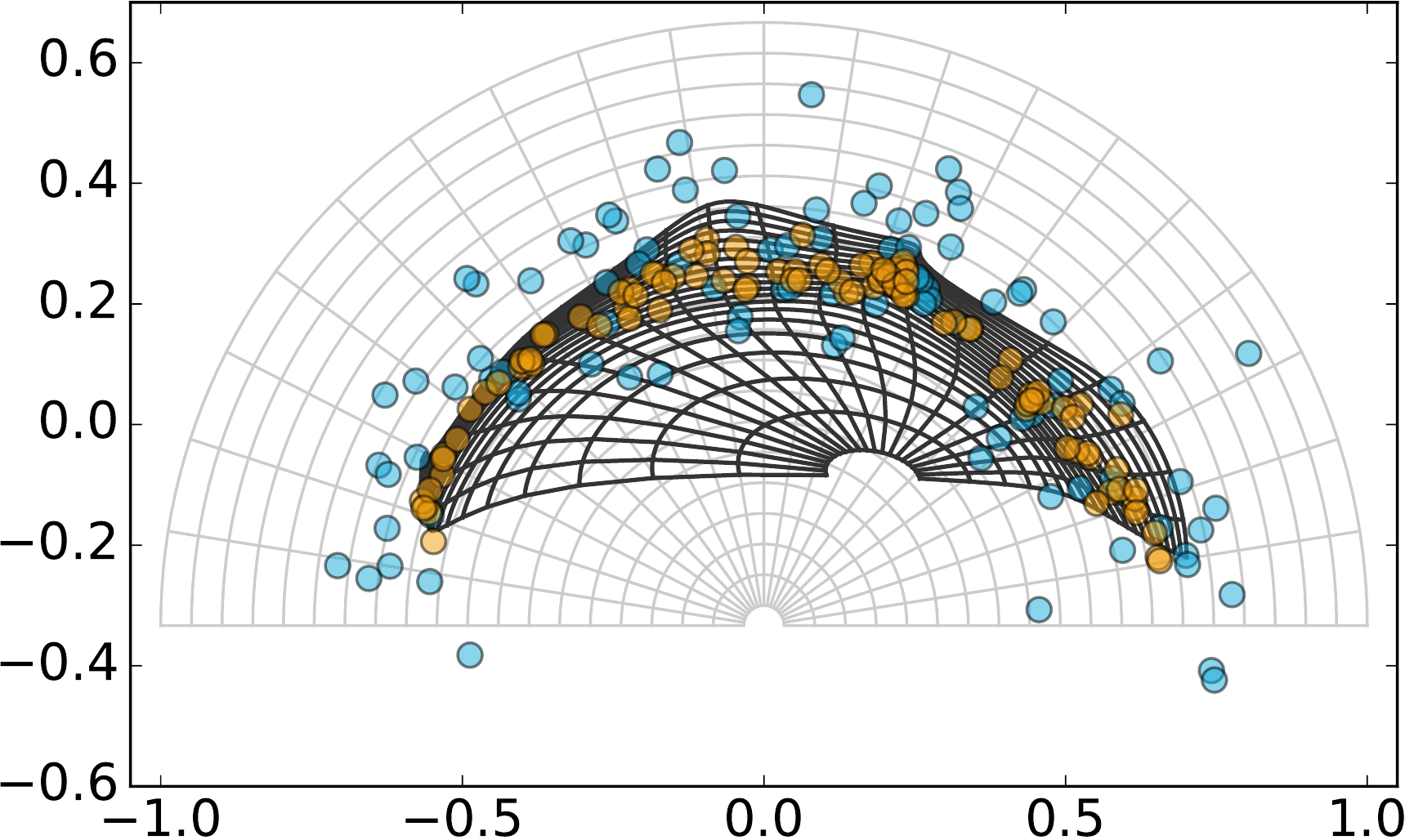} &
  \includegraphics[width=0.3\linewidth]{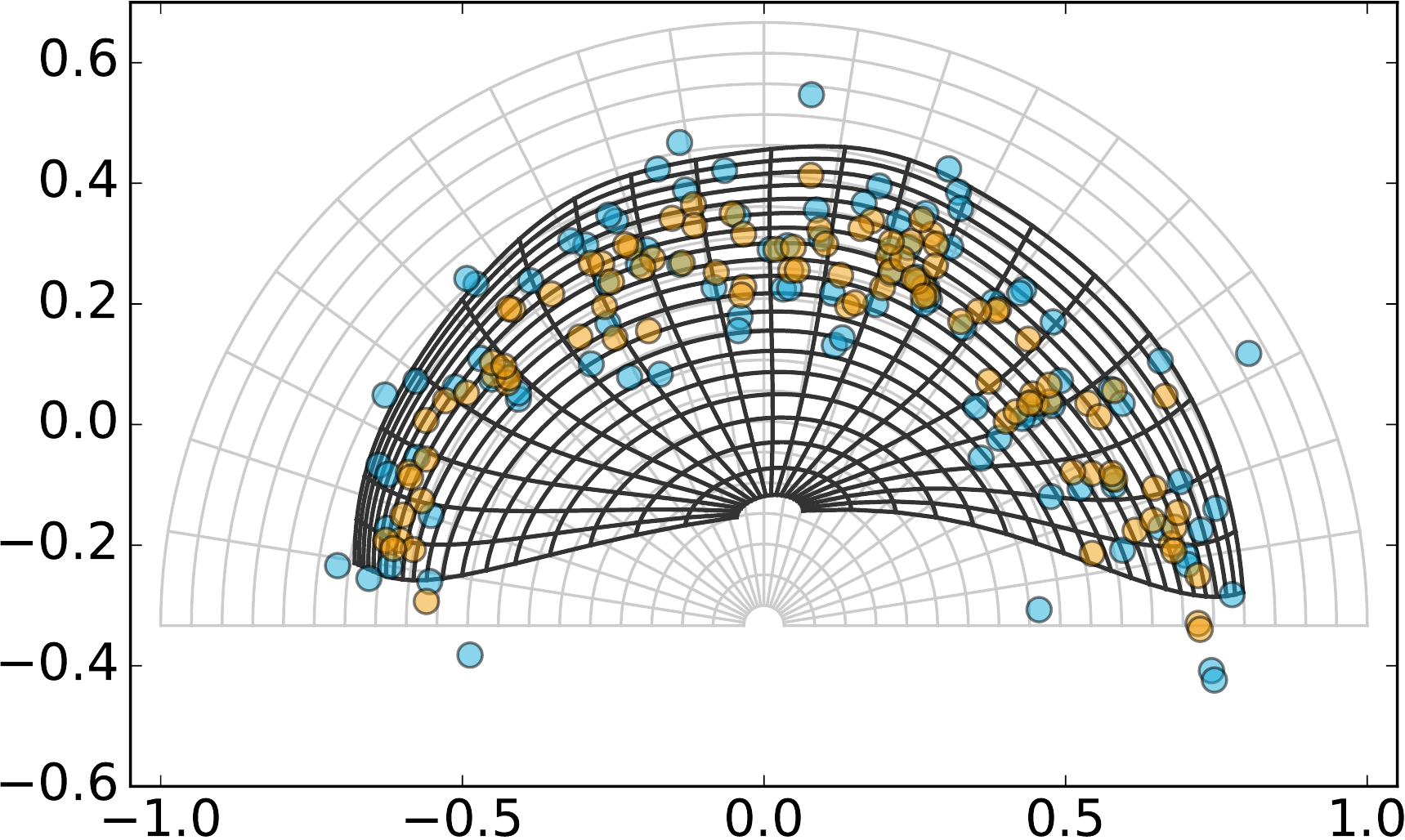} \\
  \includegraphics[width=0.3\linewidth]{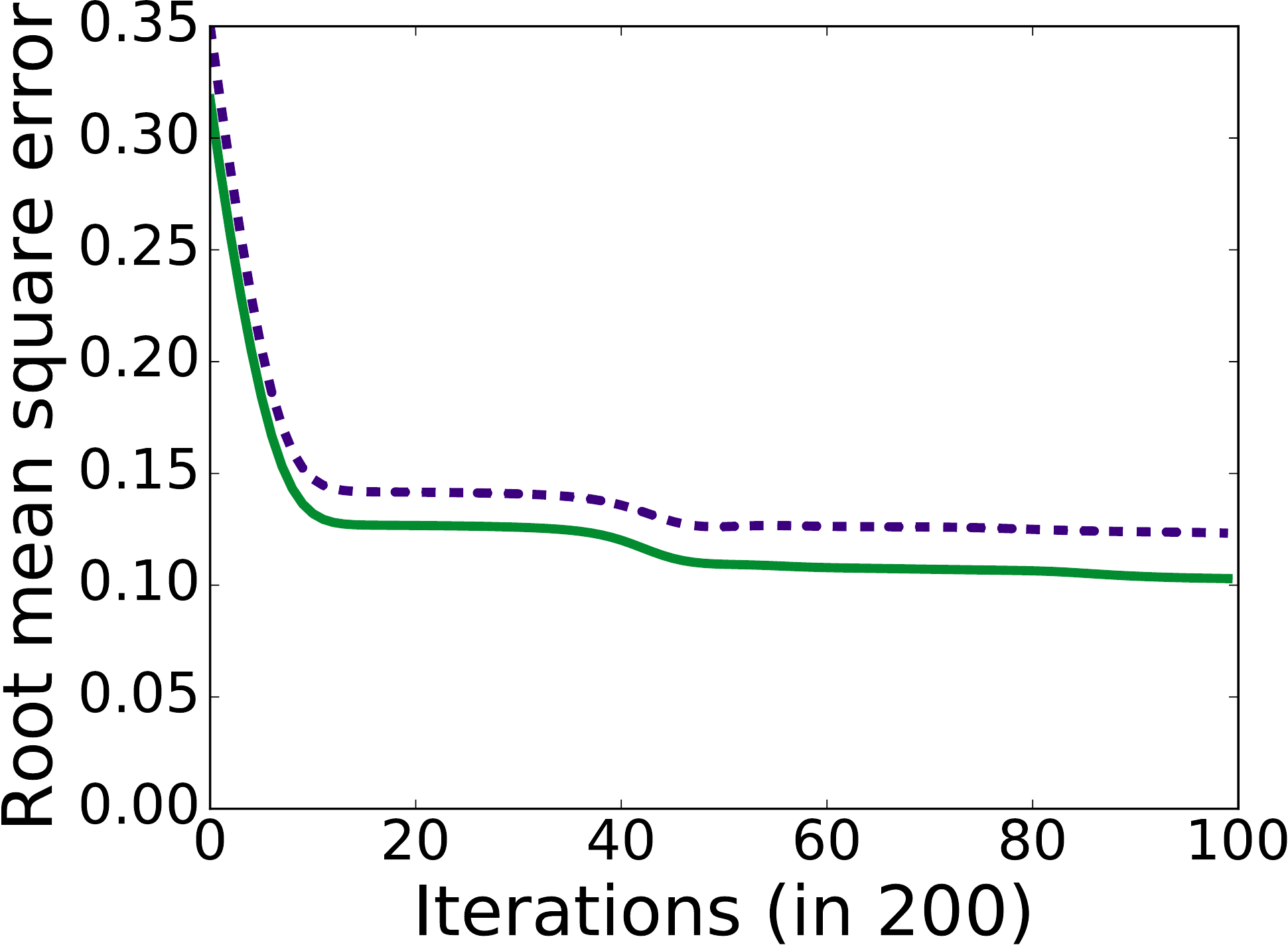} &
  \includegraphics[width=0.3\linewidth]{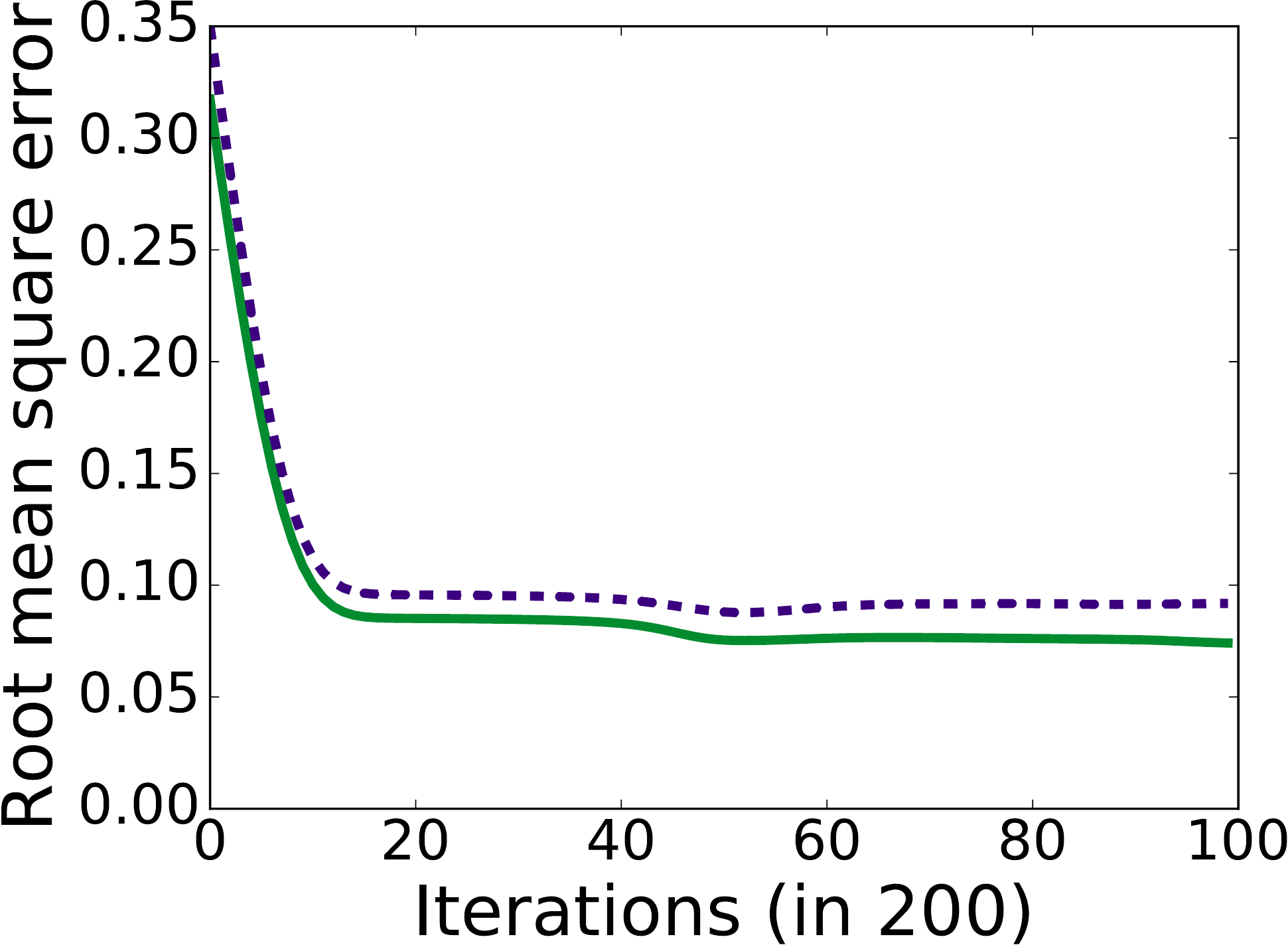} &
  \includegraphics[width=0.3\linewidth]{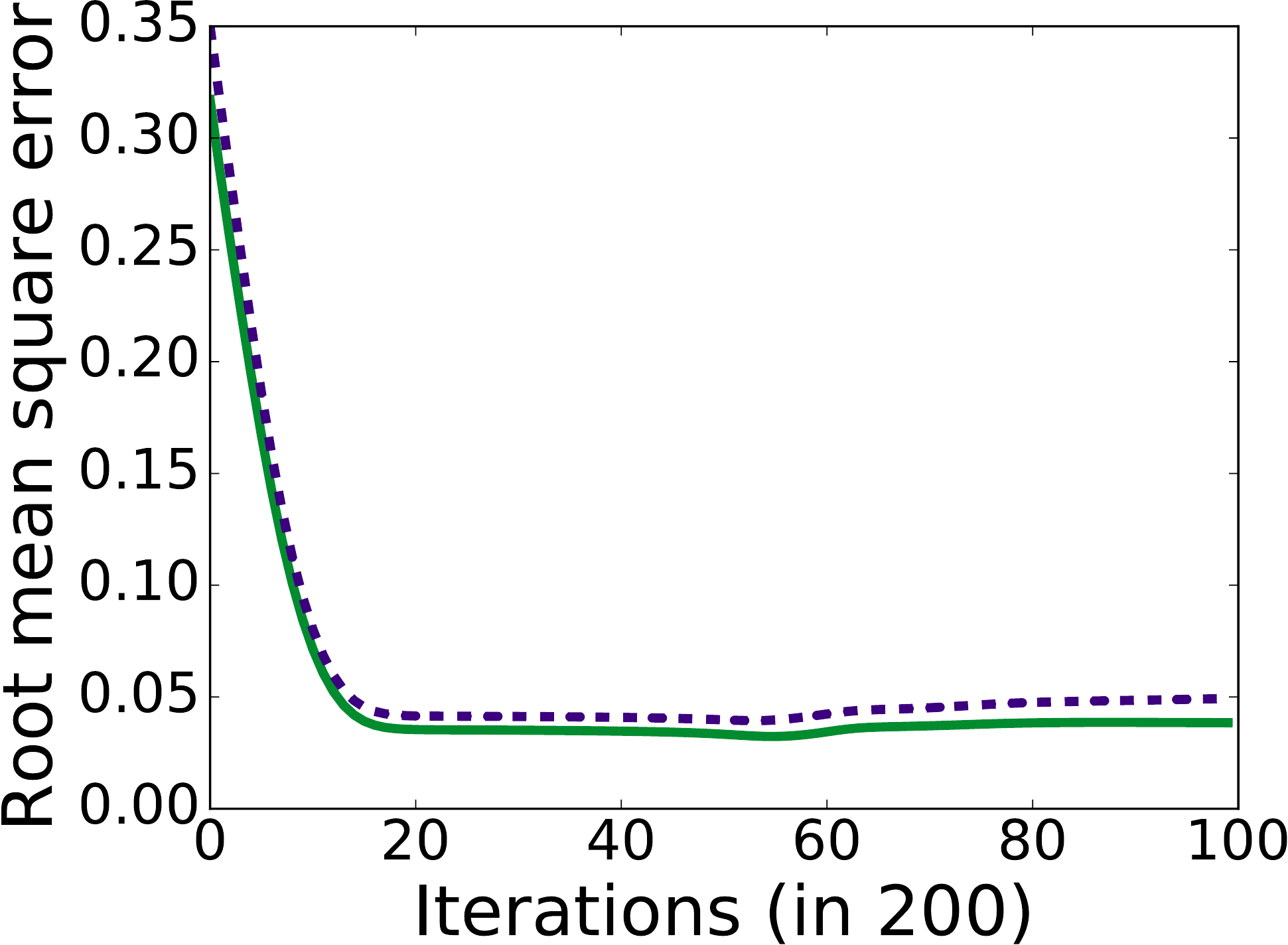} \\
  (a) & (b) & (c)
\end{tabular}
\caption{Effect of regularization penalty for fixed network architecture. The
  image show the (top row) minima found after 20,000 iterations and (bottom
  row) the root mean square error on training (green solid line) and test data
  (purple dashed line)  using a network architecture with hidden layers of
  50--50--50--50 units and  regularization weights (a) 0.04 (b)  0.02 and (c)
  0.005. The gray lines show a polar grid on the input space and the
  black lines show the deformation of the polar grid after mapping it through
  the auto--encoder.
  \label{fig:contractive-penalty}
}
\vspace{-0.05in}
\end{figure}
In practice the structure in the data is typically not know and it would be
difficult to select an appropriate network architecture. Thus, a method that
imposes a dimensionality constraint without reliance on the implicit
regularization of the network architecture is desirable. We contend that the
difficulties in regularization of auto--encoders are due to the saddlepoint
structure of the unsupervised least squares risk.

The critical points of the unsupervised least squares risk are principal
manifolds and provide an arguably reasonable summary representation of the
data. However, except for the uninformative identity solution and space filling
manifolds the critical points are saddlepoints. Figures
Figures~\ref{fig:hard-constraint},~\ref{fig:contractive-architecture}
and~\ref{fig:contractive-penalty} illustrate that the saddlepoint nature of the
critical points renders cross--validation useless, since it is possible to move
away from a critical point without increasing the risk on test data.
Furthermore steepest decent methods will typically not even move towards
desirable solutions.  In the principal manifold setting,
\citet{gerber2013regularization} propose a solution to the saddlepoint
challenge based on minimizing a different risk. The novel risk has the property
that principal manifolds are now minima instead of saddlepoints. We show that
this risk, derived from geometrical considerations, is a particular application
of a more general approach based on minimizing the norm of the Gradient of the
risk.  We apply this {\em Gradient--Norm minimization } to auto--encoders,
which results in a formulation that bears a close resemblance to contractive
auto--encoding~\citep{alain2014regularized}.



\section{The Unsupervised Least Squares Risk}
\label{sec:risk}
This section shows that the critical points of the unsupervised least squares
risk are principal manifolds and investigates the properties of those critical
points in detail. Section~\ref{sec:bgpc} revisits the definition of a principal
curves and manifolds and Section~\ref{sec:critical} establishes the connection
of the unsupervised least squares risk to principal manifolds by examination of
the first variation of the unsupervised least squares risk.

\subsection{Connection to Principal Manifolds}
\label{sec:bgpc}
Let $X$ be a random variable with a smooth density
$p$ such that the support $\Omega =\{ x:p(x) > 0\}$ is a compact, connected
region with smooth boundary.  Denote by $E$ the expectation operator, i.e.,
$E\left[ f(X)  \right] = \int_{\Omega} f(x) p(x) dx $.

Recall the formal definition of principal curves.
\begin{definition}[Principal
Curve~\citep{hastie:jasa89}] Let $g:\Lambda \to \mathbf{R}^n$, $\Lambda
\subset \mathbf{R}$ and $\lambda_g:\Omega \to \Lambda$ with {\em projection
index} $\lambda_g(x) = \max_{z} \{ s : \lVert y - g(x) \rVert =
\inf_{\tilde{z}} \lVert y - g(\tilde{s})\rVert \}$. The principal curves of $Y$
are the set $\mathcal{G}$ of smooth functions $g$ that fulfill the self
consistency property $E[Y \vert \lambda_g(X) = z] = g(z)$.
\end{definition}
\citet{hastie:jasa89} showed that principal curves are critical points of the
risk $d(g, X)^2 = \frac{1}{2}  E[ \lVert g(\lambda_g(X)) - X \rVert^2 ]$. The
principal curve risk is closely related to the unsupervised learning risk, but
imposes restriction on the form of the decoder $\lambda$, i.e., the encoder
$\lambda$ is forced to be the projection index given $g$

The typical approach to estimate principal curves optimizes over $g$ and solver
the non--linear problem of finding $\lambda_g$. To circumvent the non--linear
optimization problem of computing $\lambda_g$ \citet{gerber:iccv09} proposed a
new formulation that switches the optimization over the encoder $\lambda$ while
fixing $g_{\lambda}(z) \equiv E[ X | \lambda(X) = z ]$ to be the conditional
expectation given $\lambda$.  This yields the risk
$d(\lambda, X)^2 =
\frac{1}{2} E \left[\left\lVert g_\lambda( \lambda(X) ) - X
\right\rVert^2\right]$.
Again closely related to the unsupervised least squares risk but with
restrictions imposed on the form of the decoder $g$.

For future reference we term the three different formulations as:
\begin{compactenum}
\item Principal manifold risk (pm--risk): $d(g, X)^2$ with $\lambda_g$
   constrained to be orthogonal to $g$
\item Conditional expectation manifold risk (cem--risk): $d(\lambda, X)^2$ with
   $g_\lambda$ constrained to the conditional expectation given $\lambda$.
\item Unsupervised least squares risk (uls--risk): $d(g, \lambda, X)^2$  with $g$ and
  $\lambda$ unconstrained.
\end{compactenum}
\citet{duchamp:as96} showed that principal curves are saddlepoints of the
pm--risk $d(g, X)^2$ and \citet{gerber2013regularization} showed that critical
points of the cem--risk are weak principal curves. Weak principal curves are,
as the name implies, a slightly weaker version of principal curves:
\begin{definition}[Weak Principal Curves~\citep{gerber2013regularization}]
\label{def:weak_ps}
Let $g:\Lambda \to \mathbf{R}^n$ and $\lambda:\Omega \to \Lambda$. The weak
principal curves of $Y$ are the set $\mathcal{G}_w$ of functions $g$ that
fulfill the self consistency property $E[Y \vert \lambda(Y) = s] = g(s)$ with
$\lambda$ satisfying $\left< y - g(\lambda(y)), \left.
\frac{d}{ds}g(s)\right|_{s=\lambda(y)} \right> = 0 \, \forall y \in \Omega$.
\end{definition}
For principal curves which have no ambiguity points, i.e. all $x \in \Omega$
have a unique closest point on the curve, the definition is equivalent to
principal curves.

\subsection{Critical Points are Principal Manifolds}
\label{sec:critical}
To establish the connection of the unsupervised least squares risk
$$d(g,\lambda, X)^2 = \frac{1}{2}  E\left[ \lVert g\left( \lambda(X) \right) - X
\rVert^2 \right]$$
to principal manifolds, we show that the critical points of
$d(g, \lambda, X)^2$ are weak principal manifolds.

For the pm--risk the optimization is over decoder $g$ only the encoder $\lambda$
is defined in terms of $g$. Vice versa, for the cem--risk the optimization is
over the encoder $\lambda$ only and the decoder $g$ is defined in terms of
$\lambda$. The following theorem establishes that the critical points for the
uls--risk which includes optimization over both $g$ and $\lambda$ are weak
principal manifolds.
\begin{theorem}
\label{th:critical}
The critical points of the unsupervised leasts squares risk
$d(g, \lambda, X)^2 = \frac{1}{2}  E\left[ \lVert g\left( \lambda(X) \right) -
X \rVert^2 \right]$ are weak principal curves.
\end{theorem}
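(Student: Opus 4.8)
The plan is to characterize the critical points by setting the two \emph{partial} first variations of the uls--risk to zero: one with respect to the decoder $g$ with $\lambda$ held fixed, and one with respect to the encoder $\lambda$ with $g$ held fixed. For a joint critical point both must vanish simultaneously, and each one will turn out to reproduce exactly one of the two defining conditions of a weak principal curve in Definition~\ref{def:weak_ps}; combining them then finishes the argument.

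First I would vary $g \to g + \epsilon h$ with an arbitrary smooth $h:\Lambda \to \mathbf{R}^n$. Differentiating under the integral gives
$$\left.\frac{d}{d\epsilon}\right|_{\epsilon=0} d(g+\epsilon h, \lambda, X)^2 = E\left[ \left\langle g(\lambda(X)) - X,\, h(\lambda(X)) \right\rangle \right].$$
To convert this into a pointwise condition in the code variable $s$, I would disintegrate the expectation over the level sets $\{x \in \Omega : \lambda(x) = s\}$ using the co--area formula, so that the integral factors as an outer integral over $s \in \Lambda$ of an inner integral against the conditional law of $X$ given $\lambda(X)=s$. Since $h(s)$ is arbitrary on each fibre, vanishing of the variation forces $\int_{\{\lambda=s\}} (g(s)-x)\,p(x)\,d\sigma_s(x) = 0$, i.e. $g(s) = E[X \mid \lambda(X) = s]$. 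This is precisely the self--consistency property.

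Next I would vary $\lambda \to \lambda + \epsilon\mu$ with an arbitrary $\mu:\Omega \to \mathbf{R}$. Applying the chain rule through the differentiable map $g$ yields
$$\left.\frac{d}{d\epsilon}\right|_{\epsilon=0} d(g, \lambda+\epsilon\mu, X)^2 = E\left[ \left\langle g(\lambda(X)) - X,\, \left.\tfrac{d}{ds}g(s)\right|_{s=\lambda(X)} \right\rangle \mu(X) \right].$$
Because $p(x) > 0$ throughout $\Omega$ and $\mu$ ranges over an arbitrary class, the fundamental lemma of the calculus of variations gives the pointwise identity $\left\langle x - g(\lambda(x)),\, \left.\tfrac{d}{ds}g(s)\right|_{s=\lambda(x)} \right\rangle = 0$ for all $x \in \Omega$ (the sign is immaterial inside the inner product), which is exactly the orthogonality requirement of Definition~\ref{def:weak_ps}. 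Thus the two vanishing variations together place $g$ in $\mathcal{G}_w$.

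The main obstacle is the $g$--variation rather than the $\lambda$--variation. Justifying the co--area disintegration requires that almost every $s \in \Lambda$ be a regular value of $\lambda$, so that the fibres $\{\lambda = s\}$ are well--behaved hypersurfaces carrying a genuine conditional density and the conditional expectation $E[X \mid \lambda(X)=s]$ is well defined; one must also verify that the admissible perturbations $h$ are rich enough to localize the inner product at each $s$. The $\lambda$--variation is comparatively routine, needing only the differentiability of $g$ for the chain rule. I would also remark that for the manifold case $\Lambda \subset \mathbf{R}^d$ the derivative $\tfrac{d}{ds}g$ is replaced by the Jacobian and the orthogonality holds against the entire tangent space, so the same two--variation argument yields weak principal manifolds.
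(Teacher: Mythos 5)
Your proposal is correct and follows essentially the same route as the paper: both compute the two partial G\^ateaux derivatives (in $g$ and in $\lambda$), set them to zero for arbitrary perturbations, and read off self--consistency from the $g$--variation and orthogonality from the $\lambda$--variation, which together are the definition of a weak principal curve. Your write--up is in fact more careful than the paper's, which asserts the pointwise conditions directly without the co--area disintegration or the fundamental lemma that you invoke to justify them.
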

\begin{proof}
The G\^ateaux derivative (or first variation) of $d(\lambda, g,
X)^2$ with respect to $\lambda$ is
\begin{equation}
  \label{eq:ae-glambda}
  \frac{d}{d\epsilon} d(\lambda + \epsilon \tau , g, X)^2 =
  E\left[ \left( g \left( \lambda( X ) - X \right) \right)
  \nabla_g\left(\lambda( X )\right) \tau( X) \right]
\end{equation}
and with respect to $g$
\begin{equation}
  \label{eq:ae-gg}
  \frac{d}{d\epsilon}  d(\lambda, g + \epsilon h, X)^2 = E\left[ \langle
                               g\left( \lambda( X ) \right) - X , h\left(
                           \lambda(X) \right) \rangle \right] \, .
\end{equation}
At a critical point these have to be pointwise zero for any $\tau$ (variation
of $\lambda$) and $h$ (variation of $g$), respectively. This yields the
conditions $\left( g \left( \lambda( X ) - X \right)
\right)\nabla_g\left(\lambda( X )\right) = 0$ from
equation~\eqref{eq:ae-glambda} and $g(z) =  E[ X | \{x :\lambda(x) = z\} ] $
for any $z$ in the image of $\lambda$ from equation~\eqref{eq:ae-gg}. This
establishes that critical points of the uls--risk are weak principal manifolds.
\end{proof}

The critical points contain uninteresting minimal solutions with zero
reconstruction residual, i.e.,  space filling manifolds and the identity
mapping. Critical points of the pm--risk and cem--risk are saddlepoints, since
the uls--risk is more flexible, i.e., is a superset of both the pm--risk and
the cem-risk, the critical points of the uls--risk are saddlepoints as well.

The uls--risk poses additional challenges. If $\lambda$ is permitted to be
injective, i.e., no dimensionality constraints, then the condition that $g(z)
= E[ X | \{x :\lambda(x) = z\} ] $ can only be satisfied by the identity
mapping, since the set $\{x :\lambda(x) = z\}$ is a single point. Additionally,
in that setting the critical points have a discontinuous G\^ateaux derivative:
Consider a critical point with encoder $\lambda_*$ and decoder $g_*$, with rank
$\nabla_\lambda$ less than $n$. Let $\pi_\alpha$ be a set of maps such that
$\lambda + \pi_\alpha$ is injective and $\lVert \pi_\alpha(x) \rVert \leq
\alpha$. Now from the critical point conditions $\int_{\lambda^{-1}(\{z\})}
\left(g(z) - x \right) p(x) dx = 0$. Let $s = (\lambda +
\pi_\alpha)^{-1}(\{z\})$, now for any $\alpha > 0 $ we have $\int_{(\lambda +
\pi_\alpha)^{-1}(\{z\})} \left(g(z) - x \right) p(x) dx = \left( g(s) - x
\right) p(x) + \alpha \lVert \nabla_g(s) \rVert + O(\alpha^2) $ which cannot be
made arbitrarily small. This discontinuity is expected since the expectation
$E$ changes abruptly when moving to an injective function, or in fact at any
change in the rank of the Jacobian $\nabla_\lambda$. Thus, in order to avoid
the identity mapping it is necessary to constrain the dimensionality of the
reconstruction function.

\section{Shaping the Unsupervised Least Squares Risk}
\label{sec:saddles}

To avoid fitting overly curved solutions~\citet{rifai2011higher} propose an
explicit penalty on the shape of the solution by adding a penalty on the
Hessian to contractive encoding.  This approach combines a regularization on
the dimensionality, the Jacobian, and a regularization on the curvature, the
Hessian.  Penalizing the Hessian is akin to the proposal
by~\citet{kegl:tpami00} in the context of fitting principal manifolds.
However, the saddlepoint nature of the objective function makes it infeasible
to use cross--validation for tuning the amount of regularization required.

To address the saddlepoint challenge we propose to change the objective
function such that all critical points are local minima.
\citet{gerber2013regularization} applied this approach to principal manifolds.
They observed that principal curves are saddlepoints of the risk because curves
with smaller risk can be achieved by either violating the conditional
expectation constraint in the pm--risk or by violating the orthogonality
constraint in the cem--risk formulation.  This
lead~\citet{gerber2013regularization} to minimize orthogonality using the risk:
\begin{equation}
\label{eq:ortho}
q(\lambda, X)^2 =
E \left[ \left\langle
\left( g_\lambda( \lambda(X) ) - X \right)
,
\left.  \frac{d}{dz} g(z) \right|_{z=\lambda( X )}
\right\rangle^2 \right]  = 0
\, .
\end{equation}
\citet{gerber2013regularization} showed that all critical points of this
risk are minima and principal curves.

In this section we show that there is a general principle underlying the
derivation by~\citet{gerber2013regularization} that is not restricted to the
principal manifold case. In Section~\ref{sec:gnorm} we derive the general
principle from Newton's method for optimization and show how it leads to the
orthogonal risk in the case of principal manifolds and in
Section~\ref{sec:ortho-encoding} we apply it to auto--encoding which results in an
orthogonal contractive penalty.

\subsection{Gradient--Norm Minimization}
\label{sec:gnorm}
Newton's method for finding critical points of a function $f : \mathbb{R}^m \to
\mathbb{R}$ is to find the zero crossings of $\nabla f$. Newton's method
applied to the gradient of a multivariate function $f : \mathbb{R}^m \to
\mathbb{R}$ yields updates of the form
$$
\mathbf{x}^{k+1} = \mathbf{x}^k - \alpha \mathbf{v}
$$
with $\mathbf{v}$ a solution to
$$
\nabla^2 f( \mathbf{x}^k) \mathbf{v} =  \nabla f(\mathbf{x}^k)
$$
where $\nabla^2 f$ is the Hessian of $f$. For $\nabla^2 f$ indefinite the
iterations moves---given an appropriate step size---towards a saddlepoint.

If the Newton step is difficult to solve, i.e., the inversion of the Hessian to
expensive, one can resort to minimizing the gradient norm $||\nabla f(x)||^2$.
Since $||\nabla f(x)||^2 \ge 0$, the critical points of $f$ are minima of the
gradient--norm risk. However, depending on the structure of $f$, additional
critical points are possible.  At critical points of $||\nabla f(x)||^2$ the
gradient $\nabla^2 f(x) \nabla f(x)$ has to be zero. Thus, either the gradient
$\nabla f$ has to be zero, the Hessian $\nabla^2 f$ is zero or the gradient is
a linear combination of directions with zero curvature, i.e. $\nabla f v = 0$
or $\nabla^2 f v = 0$ for all directions $v$. For functions with non-degenerate
Hessian all critical points require $\nabla f = 0$, and the gradient--norm
minimization finds a critical point of $f$. For functions with degenerate
Hessians, inflection points in the direction of the gradient are additional
critical points of $|| \nabla f ||^2$.  Such critical points can pose a problem
for finding critical points of $f$ through gradient--norm minimization and need
to be evaluated.

Newton's method is a scaled steepest descent method with scaling by
$\nabla^2 f(x)^{-1}$.  The gradient of $\nabla f(x)$ is $\nabla^2 f(x) \nabla
f(x)$, i.e., a steepest descent with scaling by $\nabla^2 f(x)$. This avoids
having to invert the Hessian but worsen the condition number and results
in slower convergence~\citep{boyd2004convex}.

\subsection{Gradient--Norm Minimization for Principal Manifolds}
Applying the gradient-norm minimization procedure to the cem--risk recovers the
geometrically derived formulation by~\citet{gerber2013regularization}. The
gradient of $d(\lambda, X)^2$ with respect to $\lambda$ is:
\begin{equation}
E \left[ \left\langle \left( g_\lambda( \lambda(X)) - X \right) ,
         \left. \frac{d}{ds} g(s) \right|_{s=\lambda( X )} \right\rangle \right]
\, .
\end{equation}
By the calculus of variations this has to hold pointwise. Applying the
gradient--norm minimization pointwise recovers the orthogonality risk in
Equation~\ref{eq:ortho}.

\subsection{Gradient--Norm Minimization for Auto--encoding}
\label{sec:ortho-encoding}
Applying the gradient-norm minimization to the uls--risk from
Equations~\eqref{eq:ae-glambda} and~\eqref{eq:ae-gg} results in the risk:
\begin{equation}
  E\left[ \left( g\left( \lambda(X)\right) - X \right)^2 \right] + \alpha
  E\left[ \left( g\left( \lambda(X) \right)- X \right)^2 \nabla_g\left( \lambda( X) \right)^2 \right]
\end{equation}
with $\alpha=1$.
The first term is the squared residual and the second term can be seen as a
directional contraction penalty; the Jacobian of $g$ is penalized but only in
the direction of the residual vector. For principal manifolds the second term
is zero since the Jacobian of $g$ has to be zero in the direction of the
residual, i.e. $g \cdot \lambda$ is an orthogonal projection to $g$. Treating
the second term of the derivative as a penalty we can weigh it differently
through changing $\alpha$ during optimization.

Figure~\ref{fig:bottleneck-penalty} illustrates this approach on a bottleneck
neural network architecture and compares it with no and contractive
regularization.
\begin{figure}[htb] \centering
\begin{tabular}{ccc}
  \includegraphics[width=0.3\linewidth]{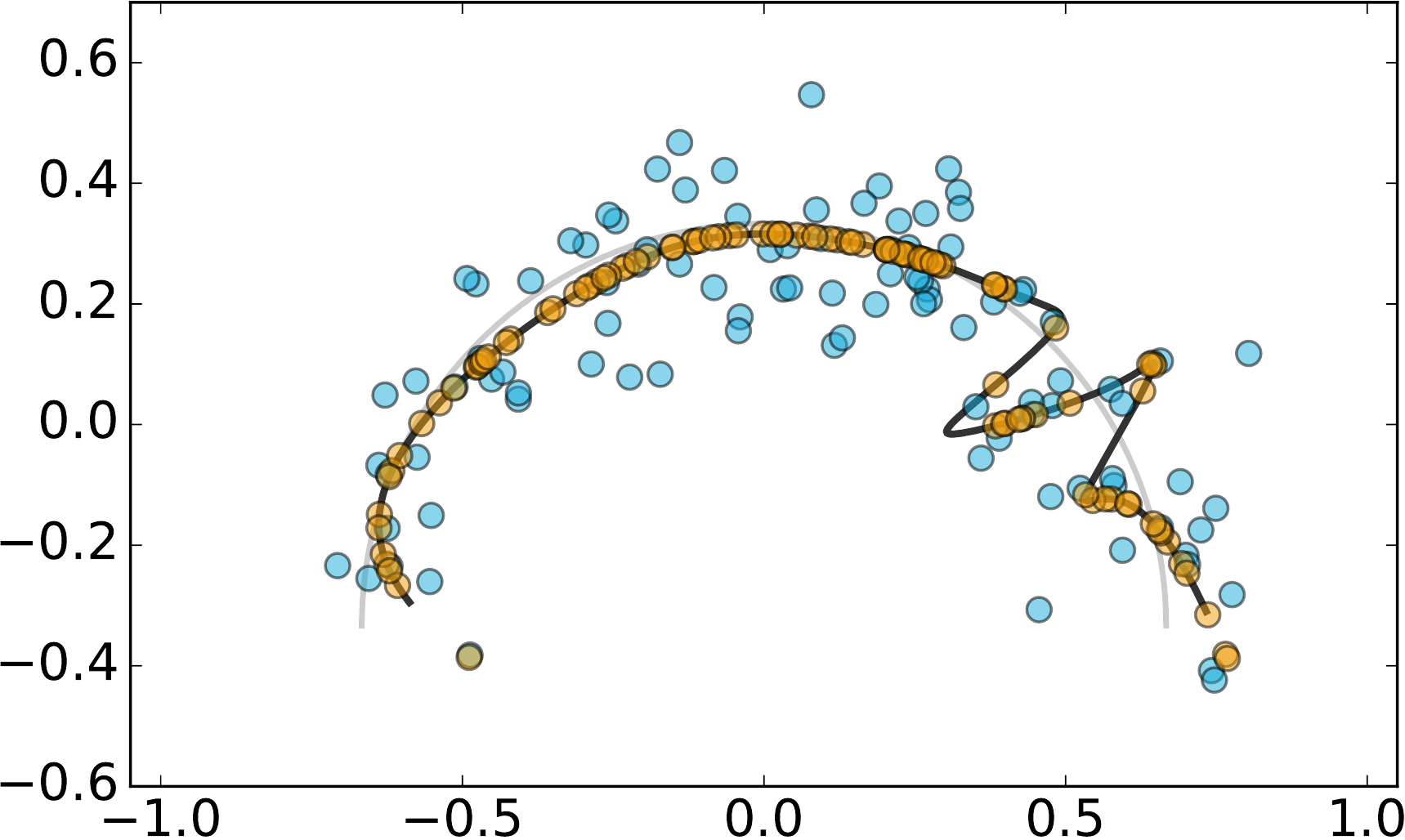} &
  \includegraphics[width=0.3\linewidth]{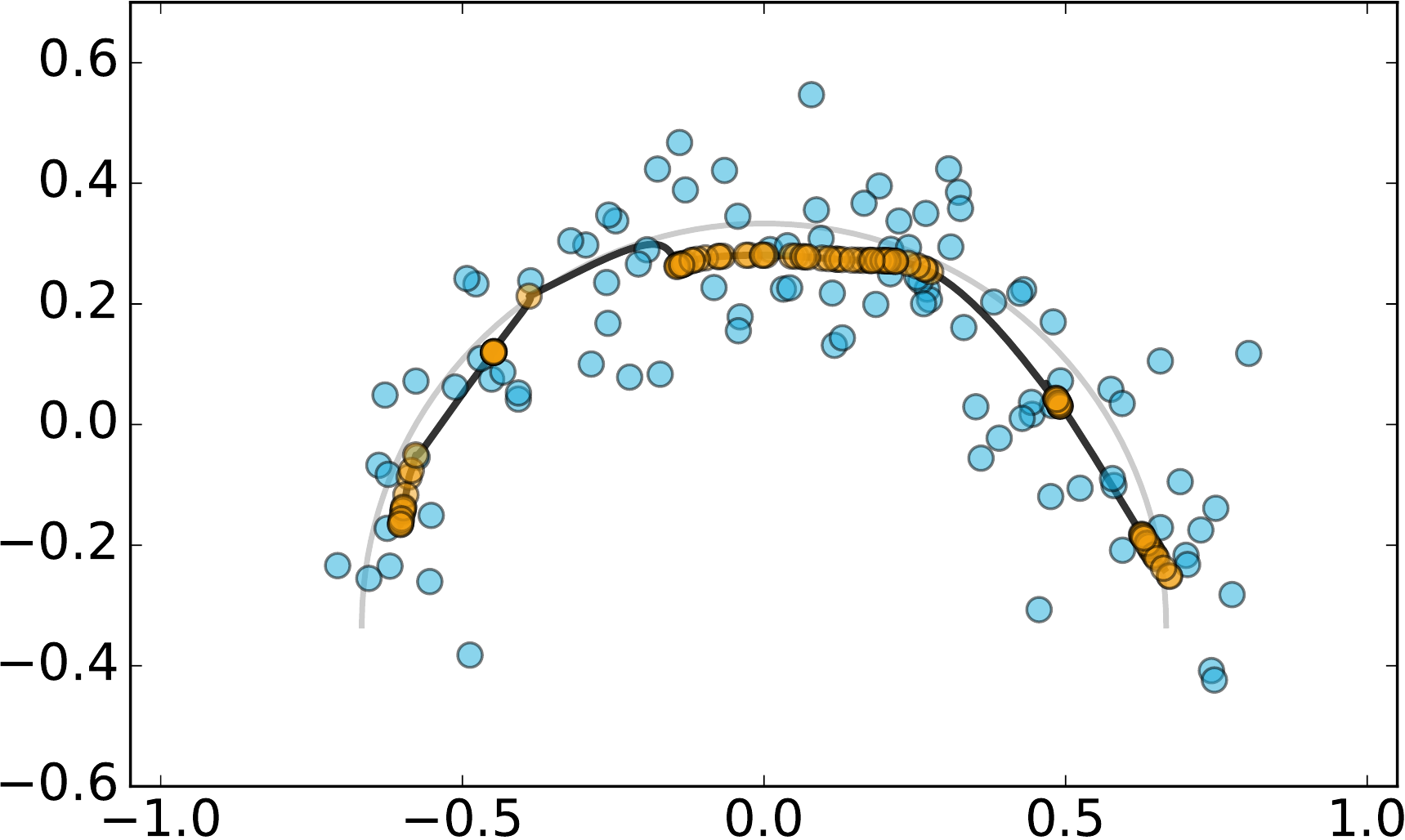} &
  \includegraphics[width=0.3\linewidth]{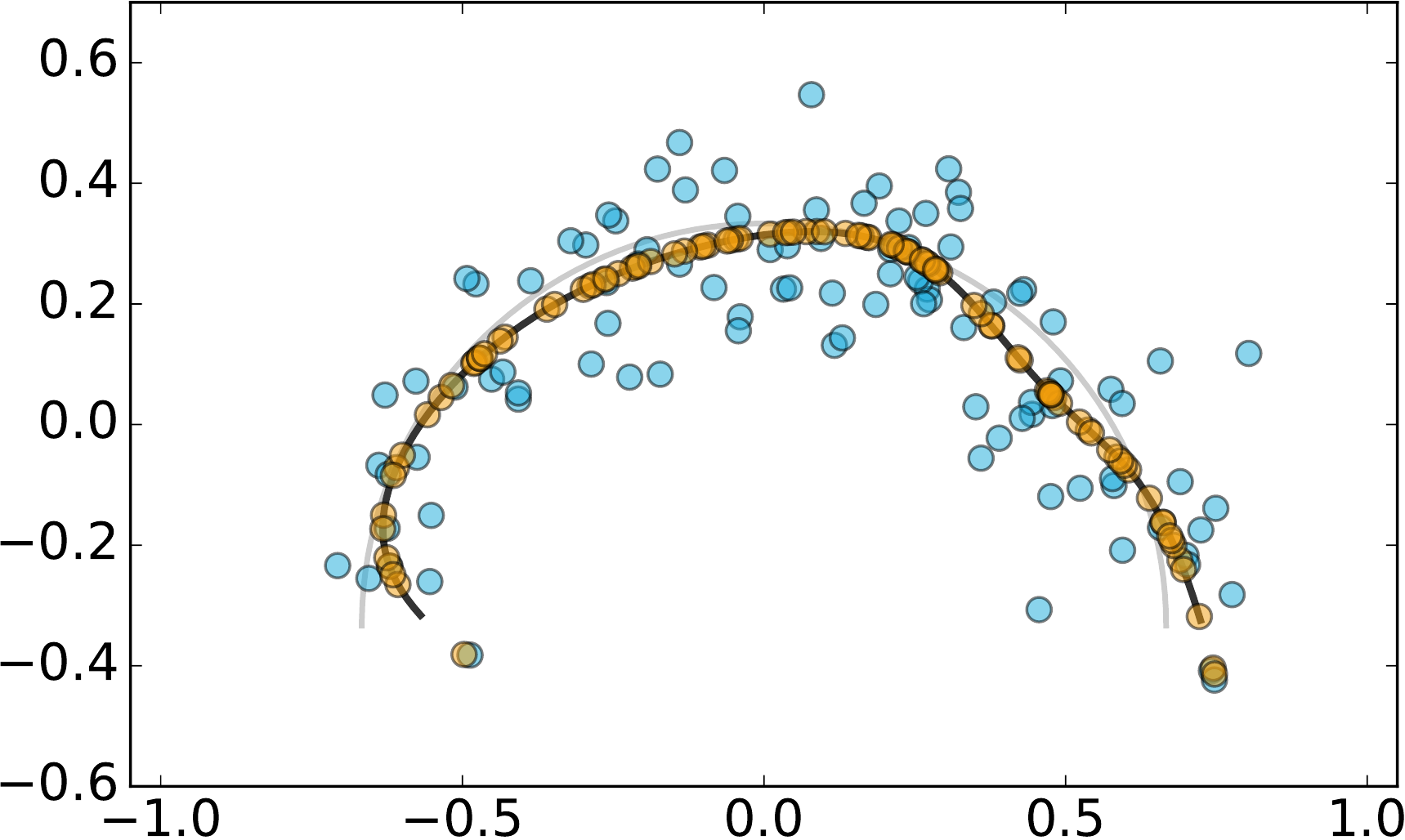} \\
  (a) & (b) & (c)
\end{tabular}
\caption{The image show the minima found after 20,000 iterations using a
  network architecture with hidden layers of
  50--100--200--1--200--100--50, units and (a) no regularization (b)
  contraction penalty and (c) orthogonal contractive penalty.
  \label{fig:bottleneck-penalty}
}
\vspace{-0.05in}
\end{figure}

\begin{figure}[htb]
\centering
  \includegraphics[width=0.9\linewidth]{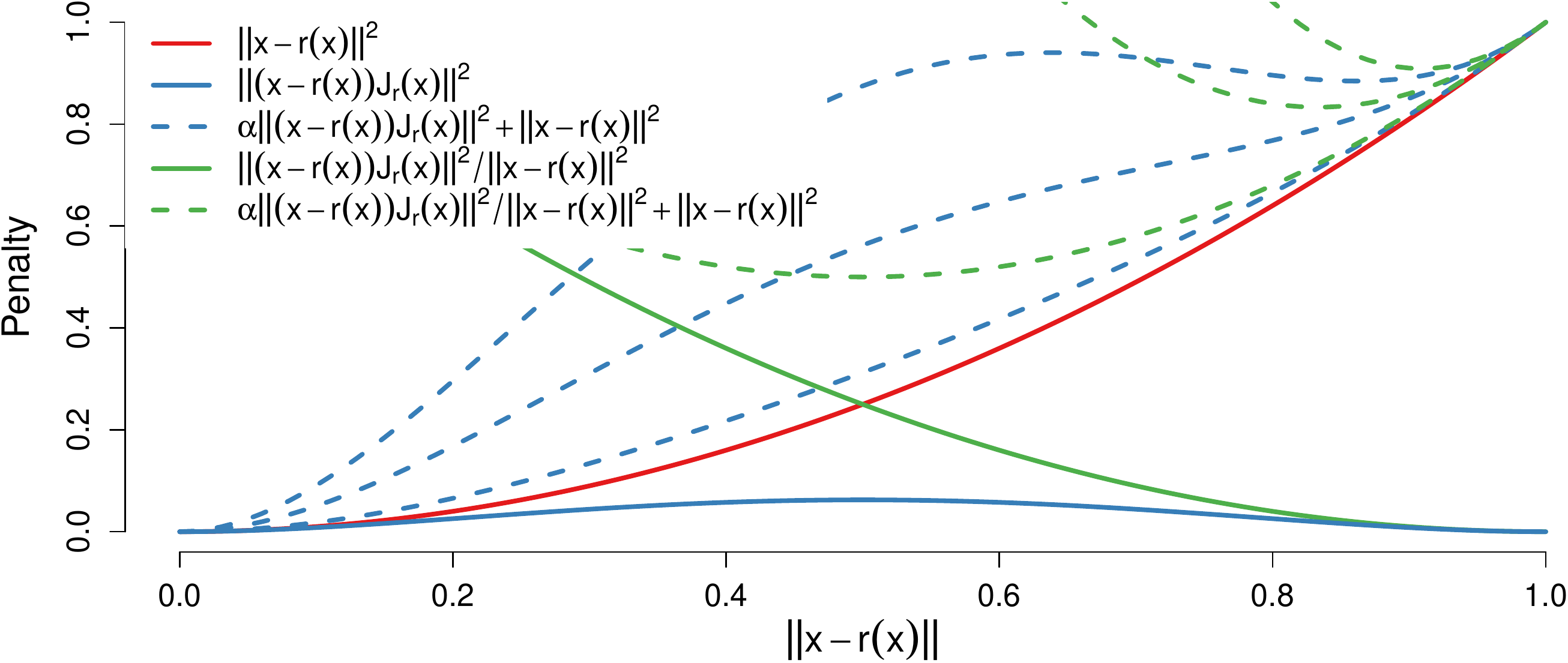}
\caption{The effect of the orthogonal contractive penalty on an idealized
  example problem. The idealized setting assumes that the Jacobian norm is
  perfectly anti--correlated with the residual. That is, as the reconstruction
  functions moves away from the identity function (at $x-r(x) = 0$) towards a
  principal manifold (at $x-r(x) = 1$) the increase in error is matched by a
  decrease in the Jacobian. The orthogonal contraction has a minima at both 0
  and 1, but combined with the squared residual term, the effect is
  negligible. Increasing the orthogonal penalty creates a minima closer to the
  principal manifold, however, the identify function still is minimal and has a
  larger region of attraction.  The normalized orthogonal penalty actively
  pushes the solution away from the identity solution and combined with the
  residual term leads to a minima close to the principal manifold.
\label{fig:penalty-shapes} }
\vspace{-0.05in}
\end{figure}
The method for finding saddlepoints does not address the issue of discontinuous
saddlepoints if the dimensionality is not restricted. The orthogonal
contractive penalty does typically not help to find such degenerate solutions
and results in the identity solution.  This behaviour is expected since moving
towards the identity solution often also reduces the orthogonal contractive
penalty as illustrated in an idealized setting in Figure~\ref{fig:penalty-shapes}.
To remedy this problem we consider normalizing the contractive penalty to:
\begin{equation}
  \alpha E\left[ \frac{\left( g \left( \lambda(X) \right) - X \right)^2 \nabla_g\left( \lambda( X) \right)^2}
 {\left( g\left( \lambda(X) \right) - X \right)^2}  \right] \, .
\end{equation}
This has the effect that the penalty increases quadratically towards the
identity solutions, as illustrated in Figure~\ref{fig:penalty-shapes}. The
normalized penalty counterweights the quadratic decrease in the reconstruction
error. Figure~\ref{fig:penalty-orthogonal} demonstrates the desired effect of
pushing the solution away from the identity.
\begin{figure}[bht] \centering
\begin{tabular}{ccc}
  \includegraphics[width=0.3\linewidth]{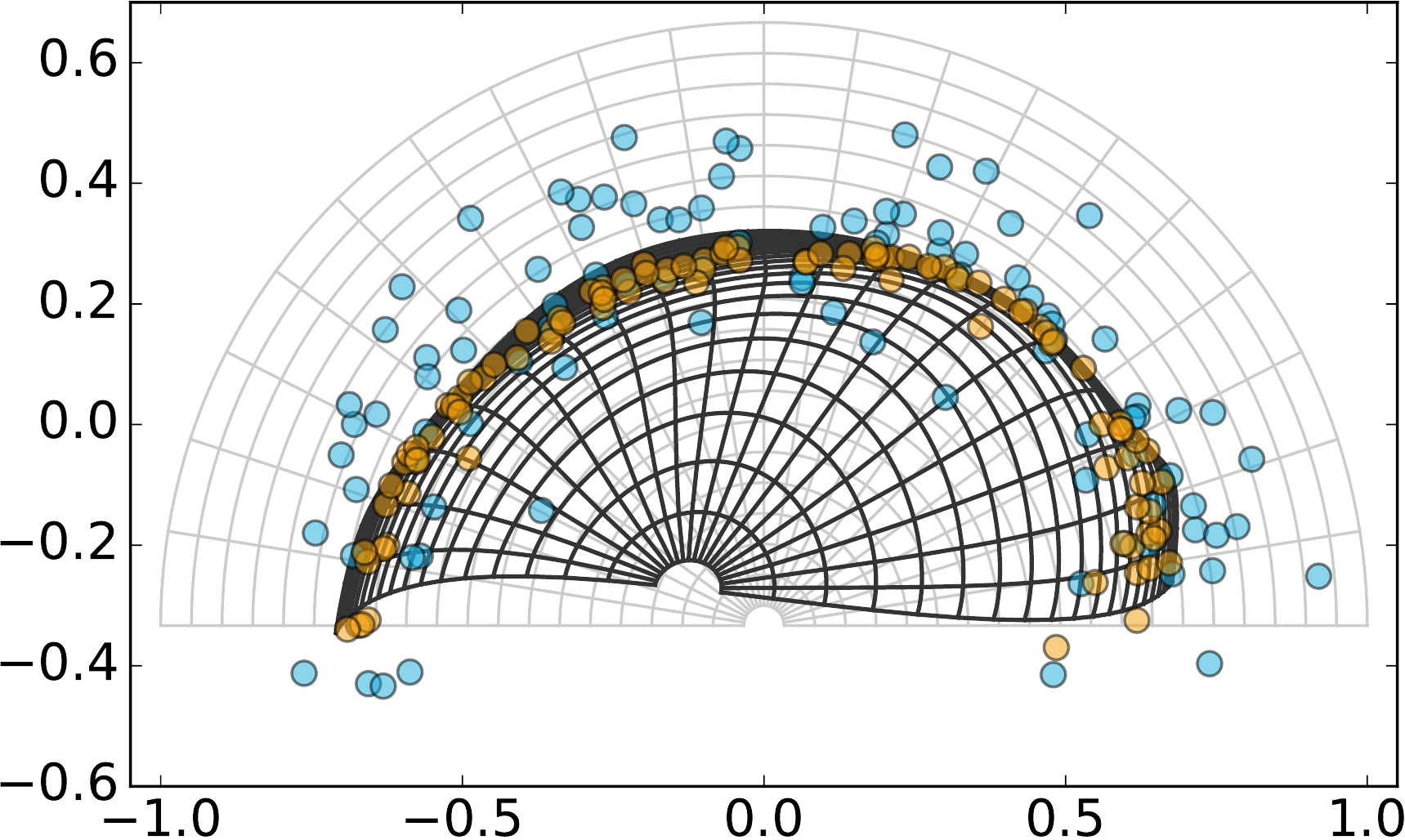} &
  \includegraphics[width=0.3\linewidth]{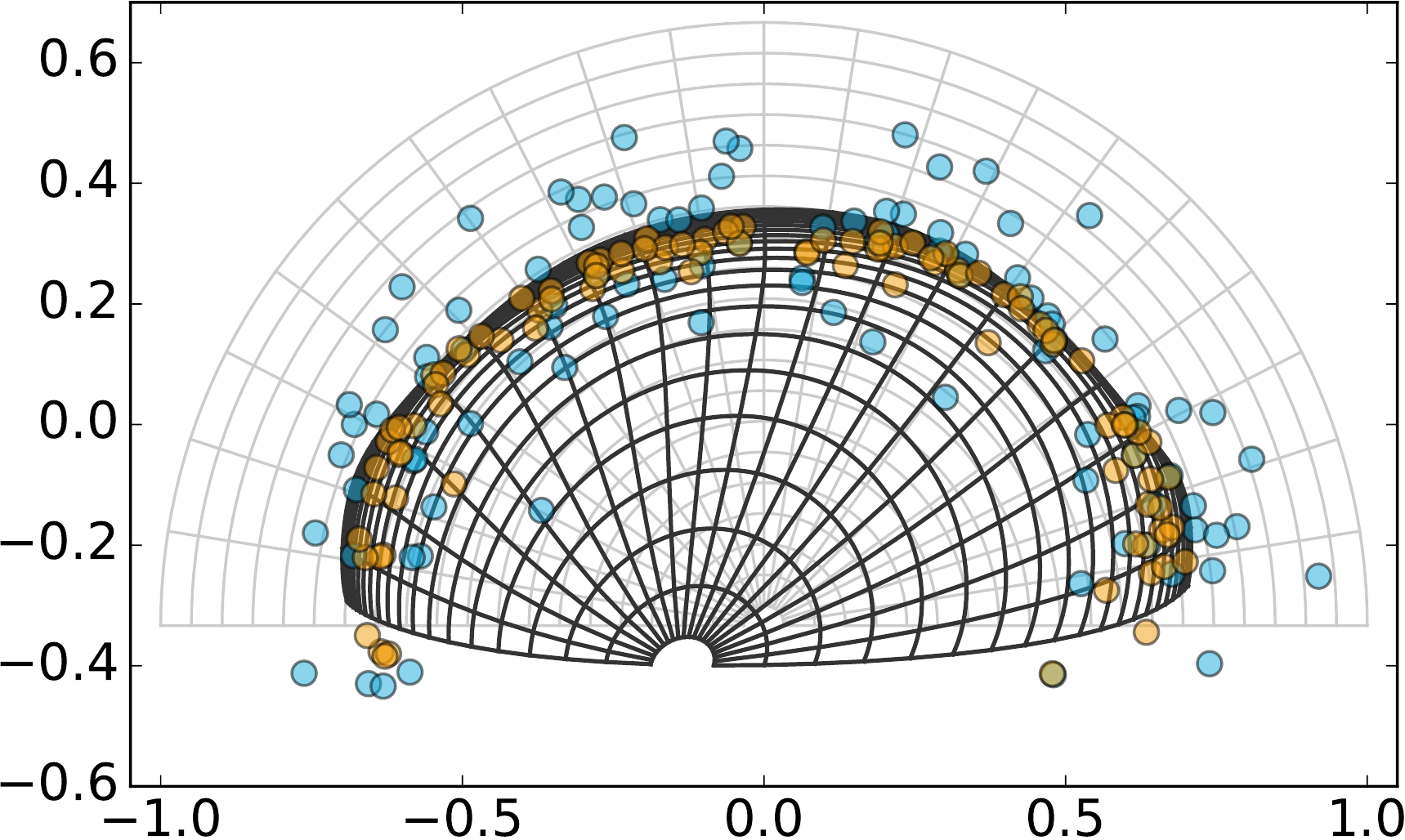} &
  \includegraphics[width=0.3\linewidth]{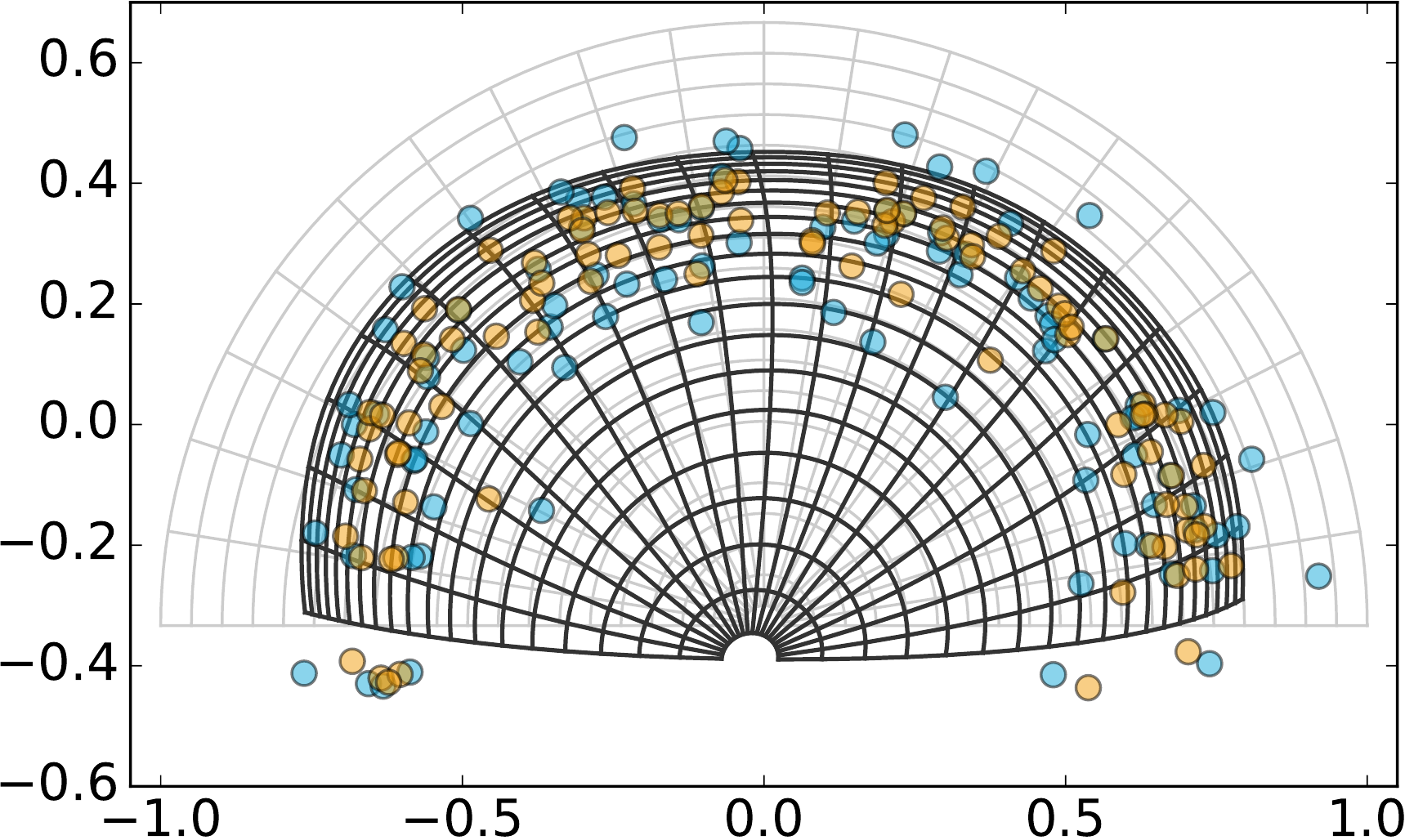} \\
  (a) & (b) & (c)
\end{tabular}
\caption{The image show the minima found after 20'000 iterations using a
  network architecture with hidden layers of 50--100--200--100--50, units and
  (a) 0.04 (b) 0.02 (c) 0.005  weighted orthogonal penalty. The orthogonal
  penalty yields reasonable results for a range of penalty weights.
  \label{fig:penalty-orthogonal}
}
\vspace{-0.05in}
\end{figure}

The orthogonal penalty works well for examples with a co--dimension of one. For
more realistic examples, with higher co--dimension, the orthogonal penalty
reaches zero if the residual vectors are contained in any subspace of the
normal bundle. In this case the orthogonal contractive penalty has to be
combined with a dimensionality constraint. Figure~\ref{fig:denoise-ortho} shows
the effect on denoising auto--encoding. The orthogonal contractive penalty
preferentially selects directions that tend towards an orthogonal projection.
\begin{figure}[htb]
\centering
\begin{tabular}{ccc}
  \includegraphics[width=0.23\linewidth]{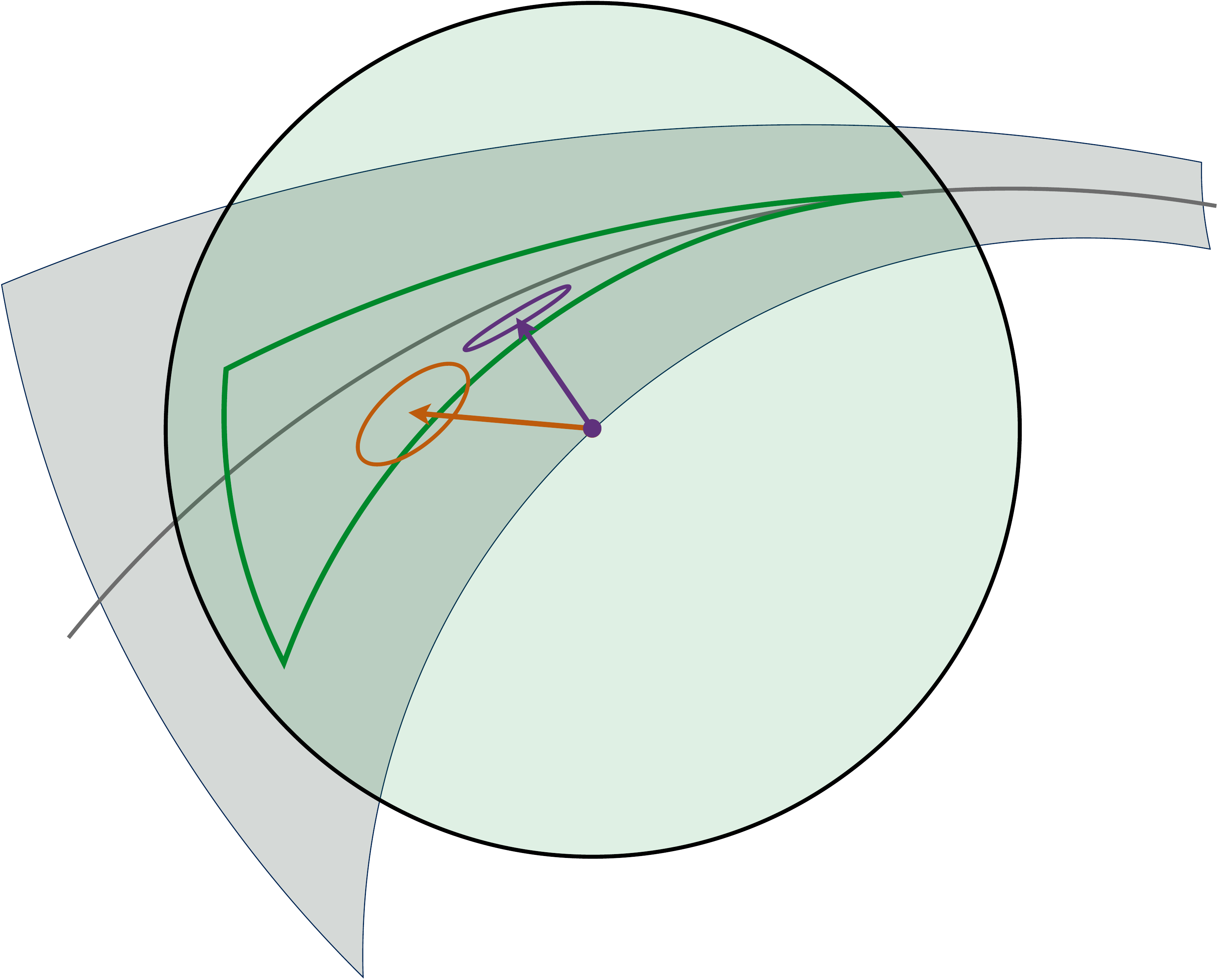} &
  \includegraphics[width=0.35\linewidth]{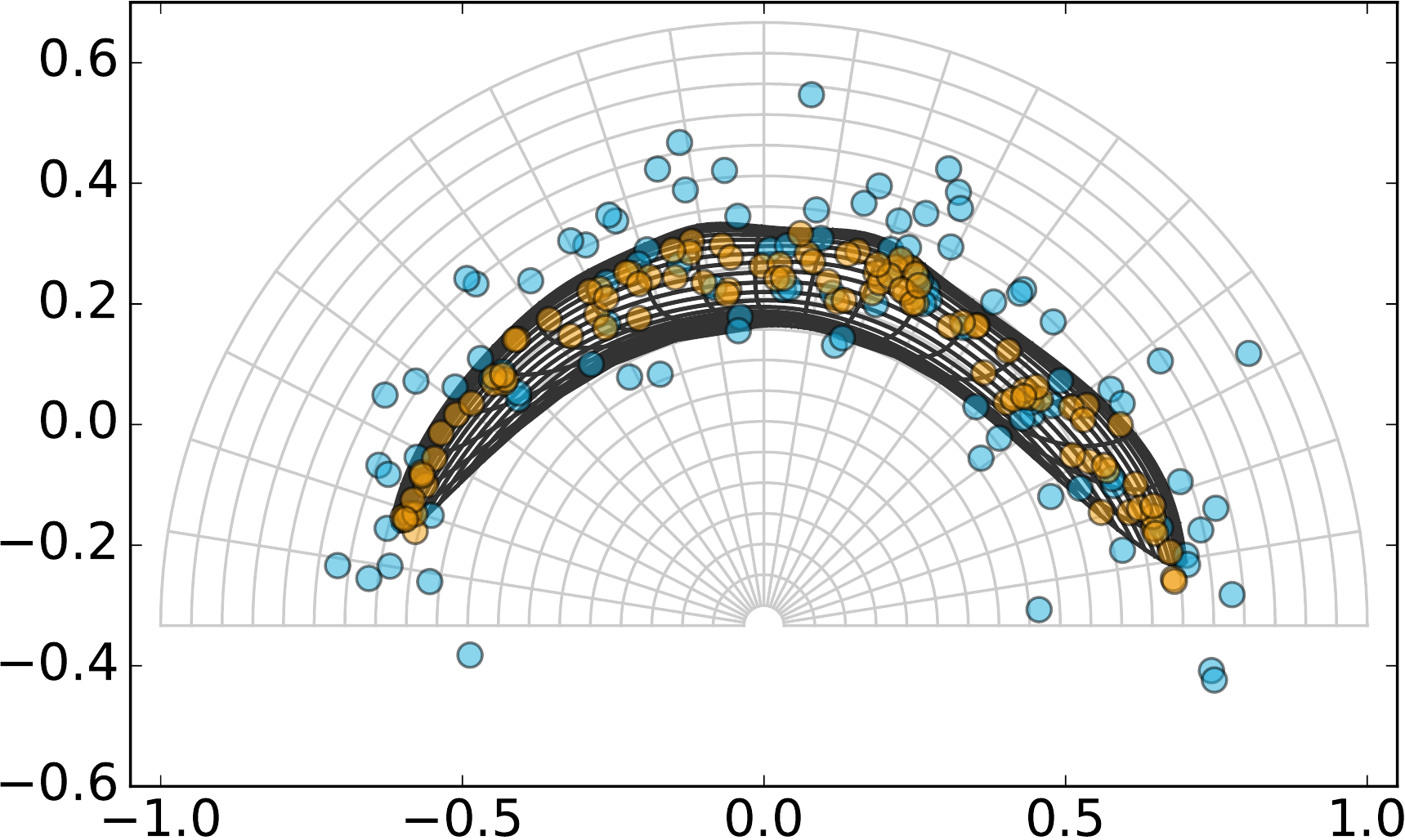} &
  \includegraphics[width=0.35\linewidth]{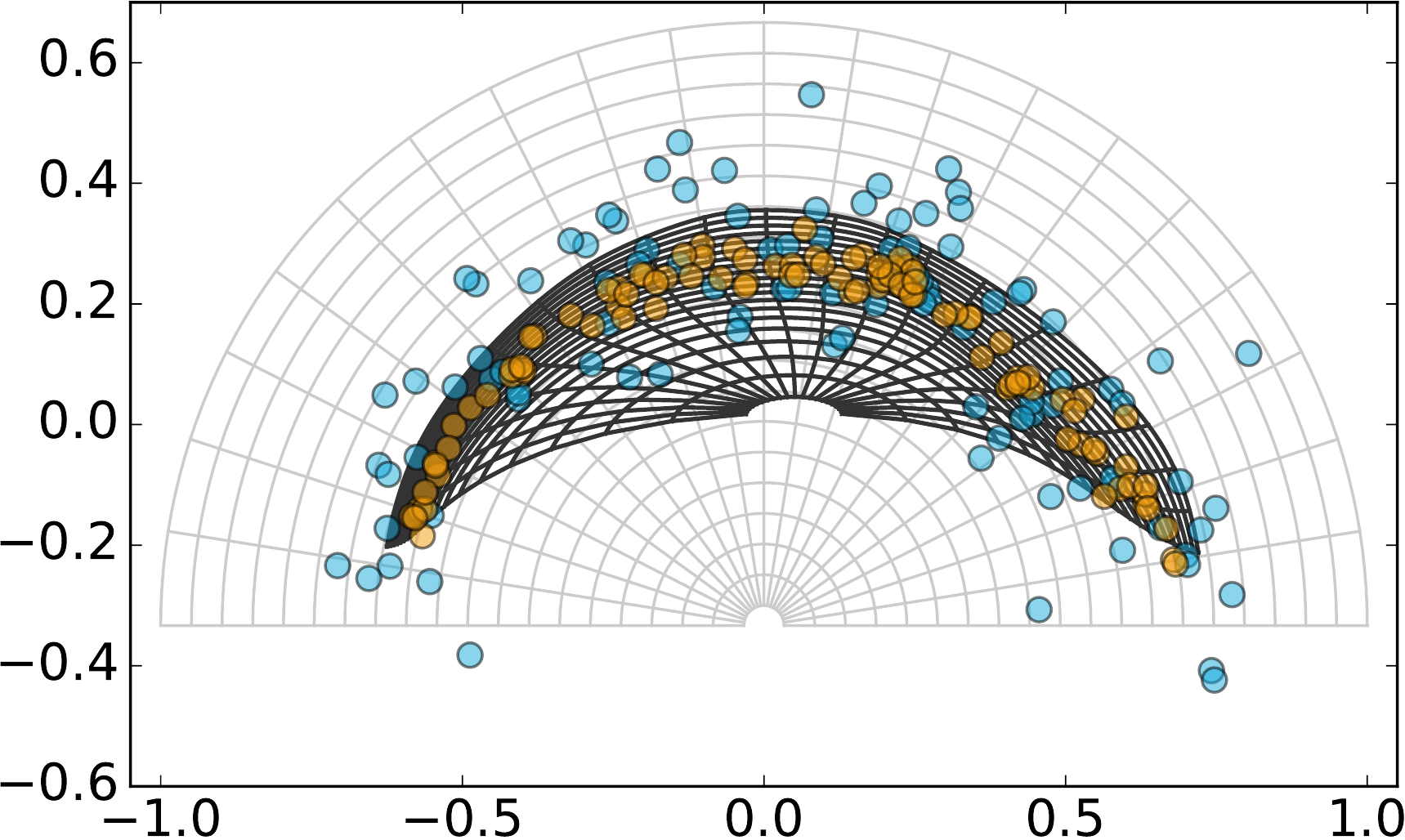} \\
  (a) & (b) & (c)
\end{tabular}
\caption{The (a) illustration depicts the effect of orthogonal contraction with
  denoising auto--encoding.  In the limit, points from the density (gray area)
  with noise added, $x + \epsilon$ (black dot) are preferentially mapped
  orthogonal to the contraction (purple) as compared to denoising alone
  (orange). The (b) orthogonal contraction  with denoising leads to sharper
  boundary as compared to (c) denoising alone.  This indicates that orthogonal
  contraction with densoing can yield solutions that generalize better to
  unseen data than denosing alone.
  \label{fig:denoise-ortho}
}
\vspace{-0.05in}
\end{figure}

\section{Conclusion}
The connection of auto--encoding neural networks with principal manifolds casts
regularization as an approach to find saddlepoints of the mean squared error
risk. The issue of saddlepoints has been recently explored in the neural
network literature. \citet{pascanu2014saddle} develop an optimization strategy
to avoid saddlepoints, while \citet{choromanska2015loss} argue that
saddlepoints are desirable solutions in deep learning. The connection to
principal manifolds makes the argument for saddlepoints as desirable solutions
explicit in the case of auto--encoding networks.

We proposed a new method, gradient--norm minimization for finding saddlepoints.
The gradient--norm minimization is potentially converging slowly. Both Newton's
method and gradient--norm minimization are gradient descent schemes under
scaling.  Newton's method scales by the inverse Hessian, while the proposed
method corresponds to a scaling by the Hessian. This scaling increases the
condition number and decreases the convergence rate.

\bibliographystyle{abbrvnat}
\bibliography{sgerber,pcae}

\end{document}